\documentclass[11pt,a4paper]{article}
\usepackage[T1]{fontenc}
\usepackage[utf8]{inputenc}
\usepackage{lmodern}
\usepackage[margin=1in]{geometry}
\usepackage{microtype}
\usepackage{newpxtext,newpxmath}
\usepackage{amsmath,mathtools,bm}

\usepackage{amsthm}
\usepackage{amsfonts,amssymb}
\usepackage{dsfont}
\usepackage{graphicx}
\graphicspath{{images/}{figures/}}
\usepackage{booktabs,threeparttable,tabularx,array,colortbl,multirow,makecell}
\usepackage{xcolor}
\usepackage{caption}
\usepackage{subcaption}
\usepackage{float}
\usepackage{pifont}

\usepackage{enumitem}
\setlist{leftmargin=*, itemsep=0.25em, topsep=0.25em}
\usepackage[most]{tcolorbox}
\tcbset{boxrule=0.6pt,arc=2pt}
\newtcolorbox{resultbox}{
colback=gray!4,
colframe=black!55,
boxrule=0.5pt,
arc=2pt,
left=4pt,
right=4pt,
top=3pt,
bottom=3pt,
before skip=8pt,
after skip=8pt
}
\usepackage[numbers,sort&compress]{natbib}
\definecolor{linkblue}{HTML}{0A66C2}
\usepackage{hyperref}
\hypersetup{
colorlinks=true,
linkcolor=linkblue,
citecolor=linkblue,
urlcolor=linkblue,
filecolor=linkblue
}
\usepackage[capitalize,noabbrev]{cleveref}
\Crefname{equation}{Eq.}{Eqs.}
\IfFileExists{orcidlink.sty}{\usepackage{orcidlink}}{%
\newcommand{\orcidlink}[1]{\href{https://orcid.org/#1}{\textsuperscript{\tiny
ORCID}}}
}
\usepackage{authblk}

\theoremstyle{plain}
\newtheorem{theorem}{Theorem}
\newtheorem{proposition}{Proposition}

\theoremstyle{definition}
\newtheorem{definition}{Definition}
\newtheorem{assumption}{Assumption}
\theoremstyle{remark}

\newcommand{\X}{\mathcal X}
\newcommand{\Y}{\mathcal Y}

\newcommand{\Ecal}{\mathcal E}

\newcommand{\R}{\mathbb R}
\newcommand{\Prob}{\mathbb P}
\newcommand{\Ex}{\mathbb E}
\newcommand{\1}[1]{\mathds{1}\{#1\}}
\newcommand{\perpact}{\perp}
\newcommand{\Yperp}{\Y^{\perp}}
\newcommand{\sigm}{\operatorname{sigm}}
\newcommand{\softmax}{\operatorname{softmax}}

\newcommand{\sg}{\operatorname{sg}}
\newcommand{\Hent}{\operatorname{H}}

\newcommand{\calR}{\mathcal R}
\newcommand{\calL}{\mathcal L}
\newcommand{\calS}{\mathcal S}

\newcommand{\best}[1]{\bm{#1}}
\newcommand{\second}[1]{\underline{#1}}
\DeclareRobustCommand{\RACER}{\textsc{Racer}}
\DeclareRobustCommand{\RACERKNN}{\textsc{Racer}\mbox{-KNN}}
\DeclareRobustCommand{\RACERKernel}{\textsc{Racer}\mbox{-kernel}}
\DeclareRobustCommand{\RACERC}{\textsc{Racer}\mbox{-C}}
\DeclareRobustCommand{\RACERDeepSets}{\textsc{Racer}\mbox{-DeepSets}}

\DeclareMathOperator*{\argmax}{arg\,max}
\DeclareMathOperator*{\argmin}{arg\,min}
 \newcommand{\ind}{\perp\!\!\!\!\!\!\perp} 
\newcommand{\maybeincludegraphics}[2]{%
\IfFileExists{#1}{%
\includegraphics[width=#2]{#1}%
}{%
\fbox{%
\parbox[c][0.20\textheight][c]{#2}{%
\centering
Placeholder figure:\\
\texttt{\detokenize{#1}}%
}%
}%
}%
}
\newif\ifblindreview
\blindreviewfalse
\title{\RACER{}: Role-Aligned Competence Estimation for Human-AI Routing}
\ifblindreview
\author{\textbf{Anonymous authors}}
\affil{}
\else
\author[1*]{Joshua Strong\orcidlink{0009-0009-2348-1955}}
\author[1]{Emma Sun\orcidlink{0009-0001-3816-5391}}
\author[1]{Alexander Capstick\orcidlink{0000-0002-3884-8045}}
\author[1]{Pramit Saha\orcidlink{0009-0006-8090-1969}}
\author[1]{Cheng Ouyang}
\author[1]{J.\ Alison Noble\orcidlink{0000-0002-3060-3772}}
\affil[1]{Department of Engineering Science, University of Oxford, UK}
\affil[*]{Correspondence: \texttt{joshua.strong@eng.ox.ac.uk}}
\fi
\date{\vspace{-8pt}\small Preprint. Work in Progress --- \today}
\newcommand{\keywords}[1]{%
\vspace{0.5em}\noindent\textbf{Keywords:} #1
}
\begin{document}
\maketitle
\begin{center}\small
\textbf{Work in progress.} This preprint reports the current method and
preliminary evaluation. Conclusions are restricted to the protocols and
baselines reported in this version.
\end{center}
\begin{abstract}
\noindent
Learning to defer asks a predictive system when to act autonomously and
when to defer to a human expert. Population-adaptive deferral extends this
problem to unseen experts using a small context set of expert behavior.
Neural context encoders such as L2D-Pop can be query-dependent, but may
learn routing shortcuts tied to absolute class coordinates. Identity-Free
Deferral (IFD) removes such shortcuts through role-indexed classwise
competence profiles, but its estimates are constant within each class and
cannot capture instance-level expert specialization.
We propose \RACER{}---\textbf{R}ole-\textbf{A}ligned
\textbf{C}ompetence \textbf{E}stimation for \textbf{R}outing---a
role-relative framework for estimating an unseen expert's competence
from context. \RACER{} estimates the posterior-predictive probability
that the expert is correct on a query under each candidate class role,
then combines these estimates with the model posterior to obtain the
Bayes-relevant expert-correctness probability. Nonparametric and neural
kernel-pooling estimators use candidate-role relations, shared aggregation,
and symmetric summaries, excluding absolute class-identity channels.
We prove coherent class-relabelling invariance, derive a Bayes-aligned
deferral surrogate, and give a plug-in regret bound relating routing
regret to classifier and competence-estimation error. On controlled
synthetic benchmarks, including a PathMNIST histopathology context-scaling
study with simulated experts, \RACER{} benefits from
additional context under hidden subtype dependence and gives the strongest aggregate performance on a separately sampled unseen-expert split in the CIFAR-100 synthetic experiments. On the radiologist and human--AI chest-radiography benchmarks
(VinDr-CXR and CheXpert), the \RACER{} family is competitive or best in
budget-swept deferral, with calibration results varying across metrics
and datasets.
\end{abstract}
\keywords{learning to defer; human-AI collaboration; calibration; unseen
experts; invariance; expert competence}
\section{Introduction}

In high-stakes prediction, the relevant question is often not only \emph{what} an automated model should predict, but \emph{who} should make the final decision. Learning to defer (L2D) formalizes this question by jointly training a \emph{classifier} and a \emph{rejector}\footnote{Jointly training the classifier and rejector is the \emph{one-stage} L2D setup. In the \emph{two-stage} setup, the classifier is frozen and only the rejector is trained \cite{mao2023two, montreuil2024two, montreuil2025adversarial, montreuil2025ask}.}: on each input, the system either
predicts autonomously or defers to a human expert\footnote{In this work, we define an expert as a skilled yet potentially biased and fallible decision-maker.} \citep{madras2018predict,mozannar2020consistent,strong_2025_17843044}. Under the standard $0$--$1$ system loss, the Bayes rule compares the model's probability of
being correct with the expert's probability of being correct \cite[Eqn. 5]{mozannar2020consistent}. L2D therefore targets the accuracy of the combined human-AI system rather than model accuracy alone, deferring cases when the expert is expected to be more likely to predict correctly.
In healthcare, such decision allocation is part of broader human--AI
collaboration \citep{strong2026haic}, with applications including guided
deferral for medical-report classification \citep{strong2025guided}.

Classical L2D assumes a fixed expert, but it is plausible to assume that practical deployments may involve
rotating clinicians, annotators, or specialists who may be unavailable,
newly hired, or drawn from a different site. Population-adaptive L2D methods
address this by conditioning the rejector on a small context set of
expert behavior for the currently available expert so that the rejector may adapt at test-time \citep{tailor2024population, strong2026identity}.

The first approach by \citet{tailor2024population} involved a query-conditioned context encoder which compared
the current case (query) with similar examples in the expert's context and could,
in principle, route instance by instance. Yet this flexibility comes
with a weak inductive bias. Standard population encoders often expose
labels and expert predictions in fixed class coordinates, so a rejector
can learn policies tied to label identities rather than to transferable
expert roles. Here, a \emph{role} is a candidate true class $y$ for which
we assess the expert's competence. Describing competence relative to this
role means using relations such as whether a context label matches $y$,
without tying the estimator to a fixed class name or index.
Unseen experts can be in-distribution (ID), drawn from the same population
distribution as the training experts, or out-of-distribution (OOD), drawn
from a different population distribution.
Identity-Free Deferral (IFD) shows that this matters for
unseen and OOD experts: by replacing latent expert embeddings with
role-indexed classwise competence profiles, IFD enforces invariance to
coherent class relabellings \citep{strong2026identity}. However, IFD's limitation
is the opposite one: a classwise profile is robust, but it is constant
within each class and cannot express that an expert may be reliable on
some cases of a class and unreliable on others.

This paper asks whether one can obtain both desirable properties \emph{simultaneously}: query-dependent adaptation and identity-free transfer. We propose
\RACER{}---\textbf{R}ole-\textbf{A}ligned \textbf{C}ompetence
\textbf{E}stimation for \textbf{R}outing---a framework for estimating
the competence profile of an unseen expert from context using only role-relative
information. The central object is a posterior-predictive competence
functional
$\Gamma(x,y,C_e)\in[0,1],$
which estimates how likely the expert represented by context $C_e$ is to
be correct on query $x$ under candidate class role $y$. The expert's
overall correctness is then obtained by combining this rolewise
competence with the model posterior. Thus \RACER{} treats expert
competence as a probability-scale object to be estimated, and not as a hidden
feature inside a routing score as in L2D-Pop \citep{tailor2024population}.
At the same time, unlike IFD \citep{strong2026identity}, which estimates
role-aligned but classwise competence profiles that are constant within
each class, \RACER{} makes competence query-dependent while preserving the
same identity-free, role-relative inductive bias.

This distinction is important because the Bayes deferral rule compares
expert correctness and model correctness on a shared probability scale.
A rejector should therefore know not only that the expert looks
better than the model on the training distribution, but how likely the
expert is to be correct for this query and this role. \RACER{} estimates
this quantity with a strictly proper expert-correctness loss, such as
binary cross-entropy, whose population minimizer is the conditional
probability of expert correctness. Thus the competence head is trained
toward a calibrated probability target, while the final deferral decision
can still be optimized with a one-stage L2D surrogate loss. In this
sense, \RACER{} preserves the classifier--rejector co-adaptation benefits
of L2D training, but changes the expert interface: the system routes
through a role-aligned competence estimate rather than through an
unconstrained expert embedding. Empirically, this design yields consistent
routing gains in controlled synthetic settings and competitive or best
budget-swept performance on the completed real-expert chest-radiography
benchmarks.


\noindent Our contributions are:
\begin{itemize}[itemsep=0.08em, topsep=0.15em, parsep=0pt, partopsep=0pt]
\item We formulate \emph{instance-dependent identity-free deferral} for
unseen experts, combining query-dependent adaptation with coherent
class-relabelling invariance.
\item We introduce \RACER{}, a role-aligned competence-estimation
framework with nonparametric and neural kernel-pooling instantiations for
estimating expert correctness from context.
\item We establish the probability target theoretically: \RACER{} estimates
the posterior-predictive competence needed by the Bayes L2D comparison,
is trained with a strictly proper expert-correctness loss, and yields
coherently invariant deferral decisions.
\item We evaluate on controlled synthetic-expert benchmarks and
real-expert chest-radiography benchmarks. \RACER{} gives the strongest
aggregate routing on the nominal OOD split in the CIFAR-100 experiments, a strong
context-scaling trend in the synthetic-expert
PathMNIST histopathology study, and competitive or best budget-swept
performance on VinDr-CXR and CheXpert.
\end{itemize}

\section{Background}
\label{sec:background}

This section reviews the theory needed to motivate \RACER{}. We first
recall the Bayes comparison and augmented-softmax surrogate for standard
L2D. We then separate two notions of decision
consistency and calibrated expert-correctness estimation. Next, we move
to population-adaptive L2D, where the expert-correctness term must be
inferred from context, and discuss the representation problem caused by
absolute class coordinates. We then review coherent class relabelling and
IFD, which gives an identity-free but classwise solution. Finally, we
identify the missing probability object: a query-conditioned,
role-relative posterior-predictive competence function for unseen experts.
Appendix~\ref{app:extended-lit-review} discusses the broader literature,
including clinical collaboration and hierarchical deferral.

\subsection{Learning to defer}
\label{ssec:l2d}

Let $\X$ be an input space and $\Y\coloneqq \{1,\dots,K\}$ a finite label space.
A random example $(X,Y)$ is drawn from a distribution $P$ over
$\X\times\Y$, and an expert produces a prediction $M\in\Y$. A deferral
system consists of a classifier $h:\X\to\Y$ and a rejector
$r:\X\to\{0,1\}$, where $r(x)=0$ means the classifier predicts and
$r(x)=1$ means the system defers. Under $0$--$1$ system loss, the risk is
\begin{equation*}
\label{eq:standard-l2d-risk}
\calR(h,r)
\coloneqq 
\Ex\left[
(1-r(X))\1{h(X)\neq Y}
+
r(X)\1{M\neq Y}
\right].
\end{equation*}
Writing $\eta_y(x)\coloneqq \Prob(Y=y\mid X=x)$, the Bayes classifier and
rejector are \citep{mozannar2020consistent}
\begin{equation}
\label{eq:standard-bayes}
h^\star(x)=\argmax\nolimits_{y\in\Y}\eta_y(x),
\qquad
r^\star(x)
=
\1{
\Prob(M=Y\mid X=x)
\ge
\max\nolimits_{y\in\Y}\eta_y(x)
}.
\end{equation}
Thus L2D is not selective classification with a fixed rejection cost. It
is a comparison between two decision makers: the classifier and the expert.

A standard optimization strategy introduces the augmented action space
$\Yperp\coloneqq \Y\cup\{\perpact\}$, where $\perpact$ denotes deferral. Let
$g_y(x)$ be class logits and $g_\perp(x)$ be a deferral logit, and define
\[
\Pi_a(x)
\coloneqq 
\frac{\exp(g_a(x))}
{\exp(g_\perp(x))+\sum_{k=1}^K\exp(g_k(x))},
\qquad a\in\Yperp.
\]
The cross-entropy L2D surrogate of \citet{mozannar2020consistent} is
\begin{equation}
\label{eq:lce-background}
\calL_{\mathrm{CE}}(x,y,m)
\coloneqq 
-\log \Pi_y(x)
-
\1{m=y}\log \Pi_\perp(x).
\end{equation}
The first term rewards the correct class, while the second rewards
deferral when the training expert is correct. Through the shared softmax
normalization, this objective jointly trains the classifier and rejector
and can encourage a capacity-limited classifier to complement the expert
observed during training \cite[Sec.~5.1 and Fig.~1]{mozannar2020consistent}.
This training-time specialization does not imply that the classifier
adapts to an unseen expert at deployment.

The consistency of this surrogate can be seen from its pointwise
conditional risk. The full population risk is the expectation of
\eqref{eq:lce-background} over $(X,Y,M)$, but by the tower property it
decomposes as an expectation over $X$ of conditional losses. For fixed
$x$, the relevant conditional population loss is
\begin{equation*}
\label{eq:conditional-softmax-loss}
\calS(\Pi\mid x)
\coloneqq 
-\sum\nolimits_y \eta_y(x)\log \Pi_y(x)
-
q(x)\log \Pi_\perp(x),
\qquad
q(x)\coloneqq \Prob(M=Y\mid X=x).
\end{equation*}
Here $\Pi(\cdot\mid x)$ ranges over distributions on $\Yperp$. Let
$\Pi^\star(\cdot\mid x)
\in
\argmin_{\rho\in\Delta(\Yperp)}
\calS(\rho\mid x)$ denote a conditional population minimizer. A weighted
cross-entropy calculation gives
\begin{equation}
\label{eq:conditional-softmax-minimizer}
\Pi_y^\star(x)=\frac{\eta_y(x)}{1+q(x)},
\qquad
\Pi_\perp^\star(x)=\frac{q(x)}{1+q(x)}.
\end{equation}
Consequently,
\[
\Pi_\perp^\star(x)\ge \max_y\Pi_y^\star(x)
\quad\Longleftrightarrow\quad
q(x)\ge \max_y\eta_y(x),
\]
which is precisely the Bayes L2D comparison in
\eqref{eq:standard-bayes}. The augmented softmax surrogate is therefore
decision-consistent, or Bayes consistent in the sense of recovering the
optimal decision boundary between prediction and deferral \citep{bartlett2006convexity}.

\subsection{From decision consistency to expert-correctness estimation}
\label{ssec:calibration-related-work}

Decision consistency does not by itself make the augmented deferral
coordinate an expert-correctness probability. Eq. ~\ref{eq:conditional-softmax-minimizer}
shows that, at the conditional population optimum,
\[
\Pi_\perp^\star(x)=\frac{q(x)}{1+q(x)},
\qquad
q(x)=\Prob(M=Y\mid X=x).
\]
Thus the augmented deferral coordinate is a normalized decision variable,
but not the expert-correctness probability itself. Recovering $q(x)$ from
the ideal optimum requires the odds transform
\[
q(x)=\frac{\Pi_\perp^\star(x)}{1-\Pi_\perp^\star(x)}.
\]
This distinction is harmless if the goal is only the Bayes defer/predict
decision, but it matters whenever the system needs probabilities for reasons such as calibrated triage, threshold selection, budgeted routing, or comparison
between multiple experts.

Recent calibrated L2D work studies this issue in fixed-expert and fixed
multi-expert settings. \citet{verma2022calibrated} show that the standard
augmented-softmax coordinate should not be read directly as a calibrated
probability of expert correctness, and propose a one-vs-all formulation
that parameterizes expert correctness more explicitly.
\citet{verma2023learning} extend related ideas to fixed multi-expert L2D,
where expert-specific probabilities must be comparable across experts.
\citet{cao2023defense} further clarify that the issue is not the use of
softmax as such, but the probability structure as class probabilities live
on a simplex, whereas expert correctness is a separate scalar in
$[0,1]$.

For the present paper, the lesson is that the expert side of the L2D
comparison should be treated as a probability-scale object, and not just as
an augmented action coordinate. In the population-adaptive setting, this
probability must additionally be inferred from the context set of an unseen
expert.

\subsection{Population-adaptive L2D and L2D-Pop}
\label{ssec:population-adaptive}

The preceding discussion assumes a fixed expert. Population-adaptive L2D extends this setting by assuming that experts are drawn from a population
and that the expert available at test time may be unseen. The rejector must
therefore depend on both the query input and the competence of the available
expert, inferred from context. At deployment the system observes a small
context set
$C_e\coloneqq \{(x_i^C,y_i^C,m_i^C)\}_{i=1}^B$, where $B$ is the number
of context examples and $m_i^C$ is expert $e$'s
prediction on context input $x_i^C$. The population risk in this setup is
\begin{equation*}
\label{eq:pop-risk-background}
\calR_{\mathrm{pop}}(h,r)
\coloneqq 
\Ex\left[
(1-r(X,E))\1{h(X)\neq Y}
+
r(X,E)\1{M_E\neq Y}
\right],
\end{equation*}
where $r:\X\times\Ecal\to\{0,1\}$ is an expert-conditioned rejector.
The corresponding identity-conditioned Bayes rule is
\begin{equation}
\label{eq:pop-bayes-background}
r^\star(x,e)
=
\1{
\Prob(M_e=Y\mid X=x,E=e)
\ge
\max\nolimits_y \eta_y(x)
}.
\end{equation}
Thus the model posterior is still query-dependent through $x$, but the
expert-correctness term is now expert-dependent and must be inferred from
the information available for the test-time expert.

In the main framework developed in Section~\ref{sec:methodology}, the
image-only classifier is fixed at deployment, while context adapts
competence estimates and routing. We also consider an extension that
conditions the class posterior on context. Neither approach requires
test-time parameter updates.

L2D-Pop addresses this problem by learning a context encoder
\citep{tailor2024population}. Let $\omega$ denote the parameters of the image
encoder and classifier head. Write $\varphi_\omega:\X\to\R^d$ for the
image encoder, $\psi$ for the context encoder, and $d_\vartheta$ for the
deferral head that maps query features and an expert representation to a
scalar deferral logit. In the query-independent (QI) and query-conditioned (QC)
variants, respectively, L2D-Pop forms $\psi_e\coloneqq \psi(C_e)$ or
$\psi_e(x)\coloneqq \psi(x,C_e)$, and implements the expert-conditioned deferral
logit as $g_\perp(x,e)\coloneqq d_\vartheta(\varphi_\omega(x),\psi_e)$ or
$g_\perp(x,e)\coloneqq d_\vartheta(\varphi_\omega(x),\psi_e(x))$. The
query-conditioned version is especially appealing because it can attend
to context examples that are similar to the current query. L2D-Pop is
therefore a strong utility baseline because it extends one-stage L2D training
to unseen experts while preserving the classifier--rejector co-adaptation
provided by the augmented softmax objective. Our comparisons use this
softmax variant. The original L2D-Pop paper also develops a one-vs-all
surrogate and reports experiments with it \citep[Apps.~A.3 and D.1]{tailor2024population};
that variant is not evaluated here.

With sufficient data, model capacity, and successful optimization, a
context-conditioned rejector can recover the Bayes comparison in
Eq. \eqref{eq:pop-bayes-background}. Its learned routing score, however,
need not provide an explicit estimate of the expert's probability of being
correct. The softmax
L2D-Pop variant considered here learns a latent routing
margin,
\[
\Delta_{\mathrm{Pop}}(x,C_e)
\coloneqq 
g_\perp(x,e)-\max\nolimits_y g_y(x),
\]
whose purpose is to compare expert and model reliability. This margin is
an object for a defer/predict decision. The softmax variant does not
directly parameterize expert correctness as a bounded probability, although
the ideal augmented-softmax output recovers it through the odds transform
above. It also does not explicitly estimate the rolewise competence law
$\Prob(M_E=y\mid X=x,Y=y,C_E=C_e)$. Consequently, a good empirical routing
margin can be supported by several signals that are correlated on the
training population: expert competence, model uncertainty, class
frequency, context-set composition, acquisition or site artifacts, or
shortcuts tied to absolute class coordinates. These signals may improve
ID routing performance, but they need not remain aligned under expert
shift.

This issue is amplified by the way standard population encoders represent
class-indexed context information. A typical context token has the form
$[\varphi(x_i^C),e_{\mathrm{lab}}(y_i^C),e_{\mathrm{lab}}(m_i^C)]$,
where $e_{\mathrm{lab}}(\cdot)$ is a one-hot or learned label embedding.
Such tokens expose absolute label identities to the expert encoder and
rejector. They allow the model to learn coordinate-specific rules, such
as deferring when an expert appears strong on a particular class index,
rather than forcing the transferable rule: defer when the expert is
strong on the relevant role. This is the gap addressed by identity-free
approaches \citep{strong2026identity}.

\subsection{Coherent class relabelling}
\label{ssec:coherent-relabeling}

The identity issue can be formalized through a symmetry. Class names are
arbitrary: renaming them coherently should rename the classifier output,
but should not change the binary defer/predict decision. Let
$\mathfrak S_K$ be the symmetric group on $\Y$. For a permutation
$\pi\in\mathfrak S_K$, define the relabelled posterior
$\eta_y^\pi(x)\coloneqq \eta_{\pi^{-1}(y)}(x)$ and the relabelled context
\[
\pi C_e
\coloneqq 
\{(x_i^C,\pi(y_i^C),\pi(m_i^C))\}_{i=1}^B .
\]

\begin{definition}[Coherent class relabelling]
\label{def:cr}
A coherent class relabelling applies the same permutation
$\pi\in\mathfrak S_K$ to every class-indexed object: labels, expert
predictions, model posteriors, and candidate roles. A classifier is
\emph{coherently equivariant} if $h^\pi(x)=\pi(h(x))$. A rejector is
\emph{coherently invariant} if $r^\pi(x,\pi C_e)=r(x,C_e)$.
\end{definition}

The Bayes classifier is equivariant under coherent class relabelling, and
the Bayes rejector is invariant \cite[Propn.~3.1]{strong2026identity}.
Architectures that expose absolute class coordinates need not satisfy
these symmetry conditions. This is a distribution-free
architectural vulnerability: even if such models perform well on ID
experts, their hypothesis class permits identity-conditioned shortcuts
that are not stable under expert shifts.

\subsection{Identity-Free Deferral}
\label{ssec:ifd-background}

IFD addresses this vulnerability by replacing unstructured expert
embeddings with explicit Bayesian competence profiles
\citep{strong2026identity}. For expert $e$, class $y$, and context
counts
\[
n_{e,y}\coloneqq \sum\nolimits_i\1{y_i^C=y},
\qquad
t_{e,y}\coloneqq \sum\nolimits_i\1{y_i^C=y,\ m_i^C=y_i^C},
\]
IFD estimates the classwise correctness probability
\[
\theta_{e,y}
\approx
\Prob(M_e=Y\mid Y=y,C_e).
\]
The rejector does not receive this profile as a labelled vector in fixed
coordinates. Instead, it receives role-indexed quantities, such as the
posterior mean and uncertainty at the model's top-ranked class, the
expert's best estimated class, and symmetric summaries across classes.
This is the key strength of IFD: it is data-efficient, interpretable,
and coherently invariant by construction. 

In this sense, IFD is closer to probability-scale competence estimation
than a latent deferral-margin method: its profile entries are explicit
posterior estimates of classwise expert correctness. Its limitation is
therefore not that competence is hidden inside an unconstrained routing
score, but that the estimated probability is classwise rather than
query-conditioned.

IFD approximates the expert's rolewise competence by
\begin{equation}
\label{eq:ifd-background-gamma-restriction}
\Gamma_{\mathrm{IFD}}(x,y,C_e)\coloneqq \theta_{e,y},
\end{equation}
which removes the dependence on query $x$. This is appropriate when
expert ability is mostly classwise. It is biased when performance varies
within class, for example when an expert is reliable on prototypical
cases but not atypical ones, or when image quality, acquisition site,
subtype, or local morphology changes the expert's reliability without
changing the label.

\subsection{The missing probability object}
\label{ssec:probability-object-view}

\begin{table}[h]
\centering
\caption{Main conceptual comparison. L2D-Pop is expressive but learns a
latent routing margin; IFD is role-aligned but classwise. \RACER{}
estimates the query-conditioned role competence object whose
marginalization gives expert correctness.}
\label{tab:comparison}
\small
\setlength{\tabcolsep}{4pt}
\renewcommand{\arraystretch}{1.08}
\begin{tabularx}{\linewidth}{@{}
>{\raggedright\arraybackslash}p{0.17\linewidth}
>{\raggedright\arraybackslash}p{0.22\linewidth}
>{\raggedright\arraybackslash}p{0.25\linewidth}
>{\raggedright\arraybackslash}X
@{}}
\toprule
Method & Context interface & Learned or estimated object & Strength and limitation \\
\midrule
L2D-Pop QI/QC (softmax) \citep{tailor2024population}
& Latent embedding $\psi(C_e)$ or $\psi(x,C_e)$
& Routing margin $g_\perp(x,\psi)-\max_y g_y(x)$
& Strong one-stage routing and, for QC, instance dependence; not explicitly calibrated as expert correctness and not generally identity-free \\
IFD \citep{strong2026identity}
& Bayesian classwise profile read through role-indexed summaries
& $\Gamma_{\mathrm{IFD}}(x,y,C_e)=\theta_{e,y}$
& Explicit probability-scale competence and coherent invariance; query-independent within class \\
\rowcolor{gray!10}
\RACER{} \emph{(Ours)}
& Role-relative query--context competence estimator
& $\Gamma_{\RACER{}}(x,y,C_e)\approx\Prob(M_E=y\mid X=x,Y=y,C_E=C_e)$
& Instance-dependent, probability-scale, and identity-free; requires informative context geometry and sufficient support \\
\bottomrule
\end{tabularx}
\end{table}

The preceding subsections identify two separate requirements for
population-adaptive deferral. From calibrated L2D, the expert side of the
Bayes comparison should be a probability-scale correctness estimate.
From coherent relabelling and IFD, this estimate should be expressed in
role-relative rather than absolute class coordinates. What remains
missing is the instance-dependent version of this object for an unseen
expert represented only by context.

For an unseen expert, the Bayes population rejector with the information
available at deployment requires the posterior-predictive correctness
probability
\begin{equation*}
\label{eq:q-object-view}
q^\star(x,C_e)\coloneqq \Prob(M_E=Y\mid X=x,C_E=C_e).
\end{equation*}
Conditioning on the unknown true class gives via the law of total probability
\[
q^\star(x,C_e)
=
\sum\nolimits_{y\in\Y}
\Prob(Y=y\mid X=x,C_E=C_e)
\Prob(M_E=y\mid X=x,Y=y,C_E=C_e).
\]
Under the context-posterior invariance condition formalized in
\Cref{assump:context-posterior-invariance}, the expert context does not alter
our label posterior once $x$ is known, i.e. $Y\ind C_E\mid X$. This should be
understood as a context-acquisition condition rather than as a property of every
possible historical expert record. Appendix~\ref{app:context-posterior-extension}
describes a more general context-conditioned posterior variant. Under this
condition, the decomposition becomes
\begin{equation*}
\label{eq:gamma-object-view}
q^\star(x,C_e)
=
\sum_{y\in\Y}
\eta_y(x)
\underbrace{
\Prob(M_E=y\mid X=x,Y=y,C_E=C_e)
}_{\Gamma^\star(x,y,C_e)}.
\end{equation*}
Thus the central population-adaptive competence object is
\[
\Gamma^\star(x,y,C_e)
\coloneqq 
\Prob(M_E=y\mid X=x,Y=y,C_E=C_e).
\]
This posterior-predictive quantity averages over uncertainty about the
expert given the labelled context examples in $C_e$. When the expert
identity $E=e$ is known, the corresponding oracle target is
$\Prob(M_E=y\mid X=x,Y=y,E=e)$.

This view clarifies the relationship between existing methods. L2D-Pop
can learn the Bayes routing margin implicitly. Its softmax variant does not
directly parameterize $q^\star(x,C_e)$ as a bounded probability or explicitly
estimate the rolewise competence law $\Gamma^\star(x,y,C_e)$. IFD estimates competence explicitly and
identity-free, but uses the query-independent approximation
$\Gamma_{\mathrm{IFD}}(x,y,C_e)=\theta_{e,y}$. \RACER{} is designed to
estimate the missing object directly: a query-conditioned,
role-relative, probability-scale competence function whose
marginalization against the model posterior gives expert correctness. \Cref{tab:comparison} summarizes our positioning: L2D-Pop provides
expressive context-conditioned routing but leaves expert competence
implicit; IFD estimates competence explicitly and identity-free but only
classwise; \RACER{} targets the missing query-conditioned,
role-relative competence probability.

\section{Methodology}
\label{sec:methodology}

\RACER{} separates classification, expert-competence estimation, and routing. The classifier
produces an estimate $p_\omega(y\mid x)$ of the label posterior
$\eta_y(x)$, the competence module estimates how
likely the expert is to be correct on a candidate role, and the final
deferral rule compares the induced expert-correctness probability with
the classifier's confidence in its predicted class, $\max_y p_\omega(y\mid x)$.

We study two competence estimators. \RACERKNN{} uses a nonparametric
$k$-nearest-neighbour estimate of expert correctness within each candidate
class. \RACERKernel{}, our default implementation, applies a learned
competence network to kernel-weighted context summaries and classifier
statistics. Both use context to estimate expert competence while retaining
an image-only class posterior. Section~\ref{ssec:racer-estimators} gives
their full specifications.

This section defines the
competence target, describes the role-relative estimators, and proves
the two properties needed for transfer: probability-scale expert-correctness
estimation and coherent-relabelling invariance.

\subsection{Problem setup and competence target}
\label{ssec:problem-setup}

We consider a population of experts. An expert identity $E\in\Ecal$ is
drawn from a population distribution. At deployment, expert $e$ may be
unseen. The system observes a context set of $B$ labelled examples of
that expert's predictions,
\[
C_E=C_e=\{(x_i^C,y_i^C,m_i^C)\}_{i=1}^B,
\]
and then receives a query $x$. We write
$a_i^C\coloneqq \1{m_i^C=y_i^C}$ for context correctness. The system must either
output a class or defer to the expert represented by $C_e$.

Recall that $\eta_y(x)$ denotes the true label posterior. Let
$p_\omega(y\mid x)$ be the classifier's estimated posterior, and
$p_{\omega,\max}(x)\coloneqq \max_y p_\omega(y\mid x)$. If the expert identity
were known, the oracle expert-correctness term would be
\begin{equation*}
\label{eq:oracle-expert-correctness}
q_e^{\mathrm{or}}(x)\coloneqq \Prob(M_E=Y\mid X=x,E=e).
\end{equation*}
\RACER{} does not assume access to this identity-level oracle. Its
deployable target is the posterior-predictive correctness probability
conditioned on the context available at test time,
\begin{equation*}
\label{eq:posterior-predictive-q}
q^\star(x,C_e)=\Prob(M_E=Y\mid X=x,C_E=C_e).
\end{equation*}
The corresponding rolewise competence target is
\begin{equation*}
\label{eq:gamma-functional}
\Gamma^\star(x,y,C_e)
=
\Prob(M_E=y\mid X=x,Y=y,C_E=C_e).
\end{equation*}
\begin{assumption}[Context-posterior invariance]
\label{assump:context-posterior-invariance}
For the deployment contexts considered in this paper,
\[
\Prob(Y=y\mid X=x,C_E=C_e)=\Prob(Y=y\mid X=x)=\eta_y(x), \quad \forall x,y \text{ and feasible contexts } C_e.
\]
\end{assumption}

\Cref{assump:context-posterior-invariance} should be read as a
context-acquisition condition rather than a property of arbitrary historical
logs. It is appropriate for randomized or stratified warm-up contexts sampled
independently of the future query stream, where $C_e$ reveals expert behavior
but not additional case-mix information. Historical contexts tied to site,
scanner, ward, service line, referral pathway, patient cohort, or other
prevalence-shifting workflow variables may violate it. In such deployments, the
Bayes expert-correctness term becomes
\[
q^\star(x,C_e)
=
\sum\nolimits_{y\in\Y}
\Prob(Y=y\mid X=x,C_E=C_e)\Gamma^\star(x,y,C_e),
\]
and we consider an optional extension, \RACERC{}, where C denotes
\emph{context-conditioned classification}. It retains \RACERKernel{}'s
competence-estimation architecture and adds a role-equivariant posterior
adapter with parameters $\rho$, replacing $p_\omega(y\mid x)$ with
$\pi_\rho(y\mid x,C_e)$. The adapted posterior is used both to combine
rolewise competence estimates and to determine the classifier's prediction
and confidence. Appendix~\ref{app:context-posterior-extension} gives the
construction, and Appendix~\ref{app:context-posterior-stress} evaluates it
under controlled shifts in case mix. Under
\Cref{assump:context-posterior-invariance}, we retain \RACERKernel{} as the
default, avoiding an additional posterior-estimation task that can
increase variance when context is limited.

Under \Cref{assump:context-posterior-invariance},
the law of total probability gives
\begin{equation*}
\label{eq:q-star-decomposition}
q^\star(x,C_e)
=
\sum\nolimits_{y\in\Y}\eta_y(x)\Gamma^\star(x,y,C_e).
\end{equation*}

\RACER{} estimates $\Gamma^\star$ with a role-aligned competence
functional $\Gamma(x,y,C_e)\in[0,1]$. The induced expert-correctness
estimate and plug-in rejector are
\begin{equation}
\label{eq:qhat-plugin-rejector}
\widehat q_\omega(x,C_e)
\coloneqq 
\sum\nolimits_{y\in\Y}p_\omega(y\mid x)\Gamma(x,y,C_e),
\qquad
\widehat r_\tau(x,C_e)
\coloneqq 
\1{\widehat q_\omega(x,C_e)-p_{\omega,\max}(x)\ge \tau}.
\end{equation}
Here $\tau$ controls the deferral budget or cost. The rest of the
methodology answers two questions: (i) \emph{how should $\Gamma$ be
parameterized so that it is expressive but identity-free?}, and (ii) \emph{how
should it be trained so that it remains a probability-scale competence
estimate rather than only a routing feature?}

\subsection{Role-relative information}
\label{ssec:role-relative-info}

Let $\varphi_\omega:\X\to\R^d$ be an image encoder, with
$z\coloneqq \varphi_\omega(x)$ and $z_i^C\coloneqq \varphi_\omega(x_i^C)$. For a candidate
role $y$, define the posterior rank
$R_\omega(y;x)\coloneqq 1+\sum_{k\in\Y}\1{p_\omega(k\mid x)>p_\omega(y\mid x)}$.
\RACER{} allows class information to enter the competence estimator only
through relations to the candidate role. For query $x$, role $y$, and
context item $i$, admissible quantities include
\begin{align*}
\label{eq:role-relative-features}
\underbrace{z,\ z_i^C,\ s(z,z_i^C)}_{\text{query--context geometry}},
\quad
\underbrace{a_i^C}_{\text{context correctness}},
\quad
\underbrace{\1{y_i^C=y},\ \1{m_i^C=y}}_{\text{candidate-role relations}},
\nonumber\\[-2pt]
\underbrace{p_\omega(y\mid x),\ R_\omega(y;x)}_{\text{query role state}},
\quad
\underbrace{p_\omega(y_i^C\mid x_i^C),\ R_\omega(y_i^C;x_i^C)}_{\text{context role state}},
\quad
\underbrace{\Hent(p_\omega(\cdot\mid x))}_{\text{query uncertainty}}.
\end{align*}
The precise feature list is less important than the invariance rule.
Labels may appear through equality relations, ranks, posterior values at
roles, or correctness indicators, but not through learned label
embeddings, untied class-specific channels, or fixed class-coordinate
vectors. Image features are allowed as non-class-indexed geometry. Any
class-indexed object derived from the classifier must transform
equivariantly under coherent relabelling (\cref{def:cr}).

Formally, let $T$ denote a summary map used by a competence estimator. We
call $T$ role-admissible if, for every class permutation
$\pi\in\mathfrak S_K$ and every posterior satisfying
$p_\omega^\pi(\pi(y)\mid x)=p_\omega(y\mid x)$,
\begin{equation*}
\label{eq:role-admissible-summary}
T(x,\pi(y),\pi C_e,p_\omega^\pi)=T(x,y,C_e,p_\omega).
\end{equation*}
\RACER{} instantiations use only role-admissible summaries and apply the
same functions across roles. This is the architectural difference
between a role-aligned competence estimator and a generic population
encoder.

\subsection{Same-role kernel pooling}
\label{ssec:same-role-kernel}

Both \RACERKNN{} and \RACERKernel{} use a query-conditioned same-role
pooling primitive. Let
$\tilde z\coloneqq \varphi_\omega(x)/\|\varphi_\omega(x)\|_2$ and
$\tilde z_i^C\coloneqq \varphi_\omega(x_i^C)/\|\varphi_\omega(x_i^C)\|_2$, define
the cosine similarity $s_i(x)\coloneqq \tilde z^\top\tilde z_i^C$, and set
$K_i(x)\coloneqq \exp(s_i(x)/\tau_{\mathrm{ker}})$. For candidate role $y$, the
same-role similarity mass is
\begin{equation}
\label{eq:same-role-mass}
S_{e,y}(x)\coloneqq \sum\nolimits_{i=1}^B\1{y_i^C=y}K_i(x),
\qquad
N_{e,y}\coloneqq \sum\nolimits_{i=1}^B\1{y_i^C=y}.
\end{equation}
These quantities compare the query only to context examples whose true
label matches the candidate role $y$.

A normalized local average is undefined without same-role support. We
define a local competence statistic with an explicit fallback and optional
prior smoothing. Let $\mu_{e,0}\in[0,1]$ be a global-context prior,
for example
\begin{equation*}
\label{eq:global-context-prior}
\mu_{e,0}
\coloneqq 
\frac{\alpha_{\mathrm{pop}}\mu_{\mathrm{pop}}+
\sum_{i=1}^B a_i^C}{\alpha_{\mathrm{pop}}+B},
\end{equation*}
where $\mu_{\mathrm{pop}}$ is a population-level expert-accuracy prior
and $\alpha_{\mathrm{pop}}\ge0$ is its strength, with
$\alpha_{\mathrm{pop}}+B>0$. For local prior strength
$\alpha_0\ge0$, define
\begin{equation}
\label{eq:local-smoothed-statistic}
\bar a_{e,y}(x)
\coloneqq 
\begin{cases}
\displaystyle\frac{\alpha_0\mu_{e,0}+
\sum_{i=1}^B\1{y_i^C=y}K_i(x)a_i^C}{\alpha_0+S_{e,y}(x)},
& \alpha_0+S_{e,y}(x)>0,\\[6pt]
\mu_{e,0}, & \alpha_0+S_{e,y}(x)=0.
\end{cases}
\end{equation}
Thus $\bar a_{e,y}(x)$ is a similarity-weighted empirical correctness
estimate for expert $e$ on context examples that are both similar to $x$
and have candidate role $y$, shrunk toward a global context prior when
same-role support is weak and $\alpha_0>0$. If $S_{e,y}(x)=0$, the estimate equals
$\mu_{e,0}$. It is query-dependent through $s_i(x)$ and identity-free
because the candidate class appears only through the equality mask
$\1{y_i^C=y}$.

In the supplied implementation, $\alpha_0=0$: supported roles use a
normalized local mean, while unsupported roles fall back to global context
accuracy. The global prior is the empirical context accuracy for $B>0$, with
$1/2$ used for empty context. Kernel weights are computed with a stable
softmax; the auxiliary similarity-mass feature caps exponential arguments at
30 for numerical stability.

\subsection{Two competence estimators}
\label{ssec:racer-estimators}

\paragraph{Nonparametric estimator.}
\RACERKNN{} uses the local same-role statistic with uniform weights on
the nearest same-role neighbors (and zero weight on the remaining items):
\begin{equation*}
\label{eq:racer-knn}
\Gamma_{\mathrm{KNN}}(x,y,C_e)\coloneqq \bar a_{e,y}(x).
\end{equation*}
This estimator is transparent in a case-based sense: its competence
estimate is the empirical correctness of expert $e$ on nearby same-role
context examples. The implemented KNN variant uses a uniform average over
the $\min(k,N_{e,y})$ nearest same-role items, rather than exponential weights,
and uses global context accuracy when no same-role item is available.

\paragraph{Learned kernel estimator.}
\RACERKernel{} adds a role-shared competence network on top of the
normalized kernel statistic. Its learned mapping supplies smoothing beyond
the unsupported-role fallback. Let
\[
\mathrm{margin}_\omega(y;x)
\coloneqq p_\omega(y\mid x)-\max_{k\neq y}p_\omega(k\mid x)
\]
be the posterior margin of role $y$. The role-relative summary is
\begin{equation*}
\label{eq:racer-kernel-summary}
\begin{aligned}
u_{e,y}(x)\coloneqq \Big(&\bar a_{e,y}(x),\ \log(1+N_{e,y}),\ \log(1+S_{e,y}(x)),\\
&p_\omega(y\mid x),\ R_\omega(y;x),\ \mathrm{margin}_\omega(y;x),\
\Hent(p_\omega(\cdot\mid x)),\ \mu_{e,0}\Big).
\end{aligned}
\end{equation*}
A small MLP $g_\theta$ is shared across all roles and defines
\begin{equation*}
\label{eq:racer-kernel-gamma}
\Gamma_\theta(x,y,C_e)
\coloneqq 
\sigm\!\left(g_\theta(u_{e,y}(x))\right).
\end{equation*}
Because the same network is applied to every role and all inputs are
role-relative, \RACERKernel{} can learn nonlinear smoothing and
calibration without learning class-identity-specific rules. In our
probability-scale interpretation, $u_{e,y}(x)$ is a summary of the full
information $(x,y,C_e)$; the proper loss below targets conditional correctness
given that summary at its population optimum.

\subsection{Proper competence training and plug-in calibration}
\label{ssec:calibration-target}

For a target tuple $(x,y,m)$ from expert $e$ and context $C_e$, define
$a\coloneqq \1{m=y}$. The neural competence head is trained with the binary
proper scoring rule
\begin{equation}
\label{eq:racer-competence-bce}
\calL_{\mathrm{comp}}(\theta)
\coloneqq 
\Ex\!\left[
-a\log \Gamma_\theta(x,y,C_e)
-(1-a)\log(1-\Gamma_\theta(x,y,C_e))
\right],
\end{equation}
where the expectation is over sampled experts, their contexts, and target
examples from the same expert. When the classifier posterior is intended
to remain a calibrated estimate of $\eta$, the representation
$\varphi_\omega$ and posterior $p_\omega$ used inside $u_{e,y}(x)$ are
best treated as fixed or stop-gradient inputs to the competence loss. If
end-to-end feature sharing is used instead, classifier calibration should
be rechecked after joint training.

\begin{proposition}[Proper target and summary projection]
\label{prop:proper-competence}
Let $A\coloneqq \1{M_E=Y}$ and let $U\coloneqq T(X,Y,C_E,p_\omega)$ be any summary made
available to a scalar predictor $g(U)\in[0,1]$, with BCE extended to the
boundary by continuity. The conditional minimizer
of the binary cross-entropy over all measurable functions of $U$ is
\[
g^\star(U)=\Prob(A=1\mid U).
\]
In particular, if $U=(X,Y,C_E)$, then
$g^\star(x,y,C_e)=\Gamma^\star(x,y,C_e)$. More generally,
the unrestricted population BCE optimum given the role-relative summary
$u_{e,y}(x)$ equals its conditional correctness probability; it recovers the
full competence functional when that summary is sufficient. This does not
certify finite-network optimization or calibration under distribution shift.
\end{proposition}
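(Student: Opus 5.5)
The plan is to reduce the population BCE to a pointwise problem by conditioning on $U$, then solve that scalar problem explicitly. By the tower property,
\[
\Ex\bigl[\ell(A,g(U))\bigr]=\Ex\Bigl[\Ex\bigl[\ell(A,g(U))\mid U\bigr]\Bigr],
\qquad \ell(a,p)\coloneqq -a\log p-(1-a)\log(1-p).
\]
Write $c(U)\coloneqq \Prob(A=1\mid U)$, taking a regular version of the conditional probability. Since $g(U)$ is $U$-measurable, the inner conditional expectation equals
\[
\phi_{c(U)}(g(U)),\qquad \phi_c(p)\coloneqq -c\log p-(1-c)\log(1-p).
\]
A measurable $g$ therefore minimizes the outer expectation if and only if $g(u)$ minimizes $\phi_{c(u)}$ for $P_U$-almost every $u$. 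The only if direction holds provided the optimal value is finite, and it is: the constant $1/2$ gives risk $\log 2$.

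The scalar step comes next. For $c\in(0,1)$, the function $\phi_c$ is strictly convex on $(0,1)$ and tends to $+\infty$ at both endpoints. Its derivative $-c/p+(1-c)/(1-p)$ vanishes only at $p=c$, so $c$ is the unique minimizer. For $c\in\{0,1\}$ we use the convention stated in the proposition: BCE is extended to the boundary by continuity with $0\log 0=0$. Then $\phi_1(p)=-\log p$ is uniquely minimized at $p=1$ (value $0$), and symmetrically $\phi_0$ at $p=0$. So in every case the minimizer is $p=c$, and it is unique. One can also phrase this as the Gibbs inequality: $\phi_c(p)-\phi_c(c)=\mathrm{KL}(\mathrm{Ber}(c)\,\|\,\mathrm{Ber}(p))\ge0$, with equality iff $p=c$. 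Finally, $u\mapsto c(u)$ is measurable, so $g^\star=c$ is an admissible minimizer, unique up to $P_U$-null sets.

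The specializations then follow directly.
\begin{itemize}
\item \emph{Full information.} If $U=(X,Y,C_E)$, then $c(x,y,C_e)=\Prob(M_E=Y\mid X=x,Y=y,C_E=C_e)$. On $\{Y=y\}$ the event $\{M_E=Y\}$ coincides with $\{M_E=y\}$, so this equals $\Gamma^\star(x,y,C_e)$.
\item \emph{Role-relative summary.} For $U=u_{e,y}(x)$ the optimum is $\Prob(A=1\mid U)$. If $U$ is sufficient, meaning $\Prob(A=1\mid X,Y,C_E)$ is $\sigma(U)$-measurable, the tower property gives $\Prob(A=1\mid U)=\Ex[\Gamma^\star\mid U]=\Gamma^\star$ almost surely, so the full competence functional is recovered.
\end{itemize}
The disclaimers in the statement about finite networks and distribution shift are interpretive and need no proof.

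The main obstacle is measure-theoretic rather than computational. First, the conditional law given $U$ must be well defined, which regular conditional probabilities settle since $A$ is binary. Second, the argument must handle the boundary cases $c\in\{0,1\}$ and possibly infinite losses at $g\in\{0,1\}$ without breaking the pointwise-to-global equivalence. I would handle both by working with extended-real-valued nonnegative losses, so the expectations always exist. The pointwise KL identity then gives
\[
\Ex[\ell(A,g(U))]-\Ex[\ell(A,c(U))]=\Ex\bigl[\mathrm{KL}(\mathrm{Ber}(c(U))\,\|\,\mathrm{Ber}(g(U)))\bigr]\ge 0,
\]
where $\Ex[\ell(A,c(U))]$ is the conditional entropy, which is at most $\log 2$. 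This identity yields both optimality and almost-sure uniqueness.
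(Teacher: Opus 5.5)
Your proposal is correct and follows essentially the same route as the paper's proof: condition on $U$ via the tower property, solve the pointwise Bernoulli cross-entropy problem by the first-order condition and strict convexity (handling $c\in\{0,1\}$ separately at the boundary), then specialize using $\{M_E=Y\}=\{M_E=y\}$ on $\{Y=y\}$. Your additions make precise points the paper states only informally: the KL identity gives global optimality and almost-sure uniqueness in one step, and you define sufficiency as $\sigma(U)$-measurability of $\Prob(A=1\mid X,Y,C_E)$, implicitly using that $U$ is a function of $(X,Y,C_E)$ so the tower property applies.
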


\begin{proof}
Condition on $U=u$ and write $\alpha\coloneqq \Prob(A=1\mid U=u)$. The
conditional risk for a scalar prediction $g\in(0,1)$ is
$-\alpha\log g-(1-\alpha)\log(1-g)$. Its derivative is
$-\alpha/g+(1-\alpha)/(1-g)$, whose unique zero is $g=\alpha$. Strict
convexity gives the unique minimizer for $0<\alpha<1$; for
$\alpha\in\{0,1\}$ the corresponding boundary value minimizes the extended loss. Taking $U=(X,Y,C_E)$ yields
$\Prob(A=1\mid X=x,Y=y,C_E=C_e)=\Gamma^\star(x,y,C_e)$. Full proof in Appendix~\ref{app:proof-proper-competence}.
\end{proof}

Under context-posterior invariance, the plug-in estimate recovers expert
correctness when the classifier posterior and rolewise competence estimates
recover their respective conditional probability targets. Marginal calibration
of the two predictors alone is insufficient.

\begin{proposition}[Plug-in calibration]
\label{prop:qhat-calibration}
Under \Cref{assump:context-posterior-invariance}, if
$p_\omega(y\mid x)=\eta_y(x)$ and
$\Gamma(x,y,C_e)=\Gamma^\star(x,y,C_e)$ for all $y$, then
\[
\widehat q_\omega(x,C_e)
=
\sum\nolimits_y p_\omega(y\mid x)\Gamma(x,y,C_e)
=
q^\star(x,C_e)
=
\Prob(M_E=Y\mid X=x,C_E=C_e).
\]
\end{proposition}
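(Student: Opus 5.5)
The plan is a direct substitution followed by the law of total probability. First, I substitute the two hypotheses $p_\omega(y\mid x)=\eta_y(x)$ and $\Gamma(x,y,C_e)=\Gamma^\star(x,y,C_e)$ into the definition of $\widehat q_\omega$ in \eqref{eq:qhat-plugin-rejector}. This gives
\[
\widehat q_\omega(x,C_e)=\sum\nolimits_{y\in\Y}\eta_y(x)\,\Gamma^\star(x,y,C_e).
\]
It then remains to identify this sum with $q^\star(x,C_e)=\Prob(M_E=Y\mid X=x,C_E=C_e)$.

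For that identification, I partition the event $\{M_E=Y\}$ according to the value of $Y$. The events $\{Y=y\}$, $y\in\Y$, are disjoint and exhaustive because $\Y$ is finite, and on $\{Y=y\}$ the event $\{M_E=Y\}$ coincides with $\{M_E=y\}$. Applying the law of total probability conditionally on $\{X=x,C_E=C_e\}$ gives
\[
q^\star(x,C_e)=\sum\nolimits_{y}\Prob(Y=y\mid X=x,C_E=C_e)\,\Prob(M_E=y\mid X=x,Y=y,C_E=C_e).
\]
The second factor is $\Gamma^\star(x,y,C_e)$ by definition. By \Cref{assump:context-posterior-invariance}, the first factor equals $\eta_y(x)$. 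The two expressions therefore agree term by term, and the identity holds.

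The only delicate step is making the conditioning rigorous when $X$ or $C_E$ is continuous, since then we are conditioning on null events. I would handle this by reading every conditional probability as a version of the regular conditional distribution. The identity then holds for almost every $(x,C_e)$, or everywhere if one fixes compatible versions, which is how the paper implicitly uses these objects. A related point concerns roles $y$ with $\eta_y(x)=0$: there $\Gamma^\star(x,y,C_e)$ may be undefined, but any value multiplies zero, so such terms contribute nothing on either side. This mirrors the earlier derivation of $q^\star$ in Section~\ref{ssec:probability-object-view}, so no new machinery is needed beyond stating these measure-theoretic conventions.
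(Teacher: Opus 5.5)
Your proposal is correct and follows essentially the same route as the paper's proof. Both decompose $q^\star(x,C_e)$ by the law of total probability over $Y$, identify $\{M_E=Y\}$ with $\{M_E=y\}$ on $\{Y=y\}$ to obtain $\Gamma^\star$, use \Cref{assump:context-posterior-invariance} to replace $\Prob(Y=y\mid X=x,C_E=C_e)$ by $\eta_y(x)$, and substitute the two hypotheses. Your remarks on regular conditional versions and zero-weight roles are extra rigor that the paper leaves implicit, not a different argument.
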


\begin{proof}
Apply the decomposition in \eqref{eq:q-star-decomposition} and substitute
the assumed conditional probability targets for $p_\omega$ and $\Gamma$.
\end{proof}

The next bound states the corresponding plug-in decision guarantee. It
is intentionally a decision-regret statement, not a claim that finite
neural training is automatically Bayes-consistent. The bound concerns ordinary
$0$--$1$ system loss without an additional deferral cost. It does not establish
optimality for macro AURSBAC, nor does zero-threshold Bayes alignment alone
guarantee optimal ranking across deferral budgets.

\begin{proposition}[Plug-in routing regret]
\label{prop:plugin-regret}
Let $h_\omega(x)\coloneqq \argmax_y p_\omega(y\mid x)$ and let
$\widehat r_0(x,C_e)=\1{\widehat q_\omega(x,C_e)\ge p_{\omega,\max}(x)}$.
Let $h^\star(x)=\argmax_y\eta_y(x)$ and
$r^\star(x,C_e)=\1{q^\star(x,C_e)\ge \eta_{\max}(x)}$, where
$\eta_{\max}(x)\coloneqq \max_y\eta_y(x)$. Under the $0$--$1$ system loss with no
additional deferral cost,
\begin{align*}
\begin{split}
&\calR(h_\omega,\widehat r_0)-\calR(h^\star,r^\star) \\ & \le
\Ex\Bigg[
3\big\|p_\omega(\cdot\mid X)-\eta(\cdot\mid X)\big\|_1 +
\sum_{y\in\Y}\eta_y(X)
\big|\Gamma(X,y,C_E)-\Gamma^\star(X,y,C_E)\big|
\Bigg].
\end{split}
\end{align*}
\end{proposition}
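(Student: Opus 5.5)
The plan is to reduce the statement to a pointwise inequality, conditioned on $(X,C_E)=(x,C_e)$, and then take expectations. By the tower property, and using $\Prob(Y=y\mid X=x,C_E=C_e)=\eta_y(x)$ from \Cref{assump:context-posterior-invariance}, the conditional system risk of any pair $(h,r)$ is
\[
c(h,r\mid x,C_e)=(1-r)\bigl(1-\eta_{h(x)}(x)\bigr)+r\bigl(1-q^\star(x,C_e)\bigr).
\]
The Bayes pair attains $\min\{1-\eta_{\max}(x),\,1-q^\star(x,C_e)\}$. It therefore suffices to bound the pointwise regret of $(h_\omega,\widehat r_0)$ against this minimum by
\[
3\|p_\omega(\cdot\mid x)-\eta(\cdot\mid x)\|_1+\sum_y\eta_y(x)\,\bigl|\Gamma(x,y,C_e)-\Gamma^\star(x,y,C_e)\bigr|.
\]

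\emph{Step 1 (estimated versus true action costs).} I would introduce true costs $c_{\mathrm{pred}}=1-\eta_{h_\omega(x)}(x)$ and $c_{\mathrm{def}}=1-q^\star$, their plug-in counterparts $\widehat c_{\mathrm{pred}}=1-p_{\omega,\max}(x)$ and $\widehat c_{\mathrm{def}}=1-\widehat q_\omega$, and the Bayes prediction cost $c^\star_{\mathrm{pred}}=1-\eta_{\max}(x)$. The rule $\widehat r_0$ chooses the action with the smaller plug-in cost.

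Consider first the case where the plug-in rule predicts. Then
\[
c_{\mathrm{pred}}-\min(c^\star_{\mathrm{pred}},c_{\mathrm{def}})
\le |c_{\mathrm{pred}}-\widehat c_{\mathrm{pred}}|+|\widehat c_{\mathrm{pred}}-c^\star_{\mathrm{pred}}|+|\widehat c_{\mathrm{def}}-c_{\mathrm{def}}|,
\]
obtained by splitting according to which Bayes action is optimal and using $\widehat c_{\mathrm{pred}}\le\widehat c_{\mathrm{def}}$. The case where the plug-in rule defers is symmetric: bound $c_{\mathrm{def}}-c_{\mathrm{def}}$ trivially, or $c_{\mathrm{def}}-c^\star_{\mathrm{pred}}\le|c_{\mathrm{def}}-\widehat c_{\mathrm{def}}|+|\widehat c_{\mathrm{pred}}-c^\star_{\mathrm{pred}}|$.

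\emph{Step 2 (classifier terms).} Since $p_\omega(h_\omega(x)\mid x)=p_{\omega,\max}(x)$, we get $|c_{\mathrm{pred}}-\widehat c_{\mathrm{pred}}|=|p_\omega(h_\omega(x)\mid x)-\eta_{h_\omega(x)}(x)|\le\|p_\omega-\eta\|_\infty$. Also $|p_{\omega,\max}-\eta_{\max}|\le\|p_\omega-\eta\|_\infty$, because the max is $1$-Lipschitz in sup norm. Both are at most $\|p_\omega-\eta\|_1$.

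\emph{Step 3 (deferral term).} Using the decomposition of $q^\star$ under \Cref{assump:context-posterior-invariance} (as in \Cref{prop:qhat-calibration}), write
\[
\widehat q_\omega-q^\star=\sum_y\bigl(p_\omega(y\mid x)-\eta_y(x)\bigr)\Gamma(x,y,C_e)+\sum_y\eta_y(x)\bigl(\Gamma-\Gamma^\star\bigr).
\]
Since $\Gamma\in[0,1]$, this gives $|\widehat q_\omega-q^\star|\le\|p_\omega-\eta\|_1+\sum_y\eta_y|\Gamma-\Gamma^\star|$. Summing the three pieces yields the pointwise bound with constant $3$ on the $\ell_1$ term, and integrating over $(X,C_E)$ completes the argument.

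The main obstacle is the case analysis in Step 1. The comparator uses $\eta_{\max}$ rather than $\eta_{h_\omega}$, so the regret mixes a classifier-choice error with a routing error. I would handle this carefully by always passing through the plug-in inequality $\widehat c_{\mathrm{chosen}}\le\widehat c_{\mathrm{other}}$, ensuring that each case incurs at most three error terms of the stated types. If a case seemed to need an extra $2\|\cdot\|_\infty$ from $\eta_{\max}-\eta_{h_\omega}$, I would instead route it through $p_{\omega,\max}$ as above, which costs only $\|\cdot\|_\infty$ per step.
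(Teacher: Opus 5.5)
Your proposal is correct and follows essentially the same route as the paper: you condition on $(X,C_E)=(x,C_e)$, reduce the pointwise excess to triangle inequalities through the plug-in quantities, and control $|\widehat q_\omega-q^\star|$ by exactly the paper's decomposition $\sum_y(p_\omega(y\mid x)-\eta_y(x))\Gamma+\sum_y\eta_y(x)(\Gamma-\Gamma^\star)$. The paper packages the case analysis differently, as the classifier term $\eta_{\max}-\eta_{h_\omega(x)}$ plus a sign-disagreement term bounded via $|q-\eta_{\max}|\le|(q-\eta_{\max})-(\hat q-\hat p_{\max})|$, which is equivalent to your use of $\widehat c_{\mathrm{chosen}}\le\widehat c_{\mathrm{other}}$; routing through $p_{\omega,\max}$ gives you the marginally sharper pointwise term $2\|p_\omega-\eta\|_\infty+\|p_\omega-\eta\|_1$, and your explicit appeal to \Cref{assump:context-posterior-invariance} surfaces a hypothesis that the statement uses only implicitly.
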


\begin{proof}
Condition on $(X=x,C_E=C_e)$ and abbreviate
$q\coloneqq q^\star(x,C_e)$, $\hat q\coloneqq \widehat q_\omega(x,C_e)$,
$\eta_{\max}\coloneqq \max_y\eta_y(x)$, and
$\hat p_{\max}\coloneqq p_{\omega,\max}(x)$. The Bayes conditional correctness is
$\max\{\eta_{\max},q\}$, while the plug-in system has conditional
correctness $(1-\widehat r_0)\eta_{h_\omega(x)}+\widehat r_0 q$. Add and
subtract the correctness that would be obtained by using the Bayes top
class whenever the plug-in system predicts. The resulting excess is at
most
\[
\eta_{\max}-\eta_{h_\omega(x)}
+
|q-\eta_{\max}|\,
\1{\operatorname{sign}(q-\eta_{\max})\neq
\operatorname{sign}(\hat q-\hat p_{\max})}.
\]
The first term is at most
$\|p_\omega(\cdot\mid x)-\eta(\cdot\mid x)\|_1$. On the sign-disagreement
event,
$|q-\eta_{\max}|\le |\hat q-q|+|\hat p_{\max}-\eta_{\max}|$. Moreover,
$|\hat p_{\max}-\eta_{\max}|\le
\|p_\omega(\cdot\mid x)-\eta(\cdot\mid x)\|_1$, and
\[
|\hat q-q|
\le
\|p_\omega(\cdot\mid x)-\eta(\cdot\mid x)\|_1
+
\sum\nolimits_y\eta_y(x)|\Gamma(x,y,C_e)-\Gamma^\star(x,y,C_e)|.
\]
Combining these inequalities and taking expectation gives the claim. Full proof in App. \ref{app:proof-plugin-regret}.
\end{proof}

This is the sense in which \RACER{} differs from a pure routing model.
The final rejector may be trained with a decision surrogate, but the
expert side of the comparison has a separate probability target and an
explicit path from estimation error to deferral regret.

\subsection{Invariance and instance dependence}
\label{ssec:invariance-instance}

Under a coherent relabelling $\pi\in\mathfrak S_K$, assume the classifier
posterior transforms equivariantly:
$p_\omega^\pi(\pi(y)\mid x)=p_\omega(y\mid x)$. Then the quantities used
by \RACER{} are preserved at corresponding roles: same-role masks,
correctness indicators, similarities, ranks, entropies, posterior values,
support counts, similarity masses, the global prior $\mu_{e,0}$, and the
smoothed statistic $\bar a_{e,y}(x)$.

\begin{theorem}[Coherent-relabelling invariance]
\label{thm:cr-invariance}
Under posterior equivariance, both \RACERKNN{} and \RACERKernel{} satisfy
\[
\Gamma^\pi(x,\pi(y),\pi C_e)
=
\Gamma(x,y,C_e),
\]
where $\Gamma$ denotes the corresponding estimator
$\Gamma_{\mathrm{KNN}}$ or $\Gamma_\theta$. Consequently,
$\widehat q_\omega^\pi(x,\pi C_e)=\widehat q_\omega(x,C_e)$,
$p_{\omega,\max}^\pi(x)=p_{\omega,\max}(x)$, and the plug-in rejector in
Eq. \eqref{eq:qhat-plugin-rejector} is coherently invariant.
\end{theorem}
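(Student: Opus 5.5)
The argument is a direct verification: every input to $\Gamma$ is role-admissible, the maps applied to those inputs are role-shared, and the invariance of $\widehat q_\omega$ follows by reindexing the sum over roles.

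\textbf{Step 1: primitive quantities.} Fix $\pi\in\mathfrak S_K$ and assume $p_\omega^\pi(\pi(y)\mid x)=p_\omega(y\mid x)$. I will check each primitive in turn.
\begin{itemize}
\item The encoder $\varphi_\omega$ acts on images only, so the cosine similarities $s_i(x)$ and the kernel weights $K_i(x)$ do not depend on labels.
\item Correctness indicators are preserved because $\pi$ is a bijection: $\1{\pi(m_i^C)=\pi(y_i^C)}=\1{m_i^C=y_i^C}=a_i^C$.
\item Same-role masks are preserved for the same reason: $\1{\pi(y_i^C)=\pi(y)}=\1{y_i^C=y}$.
\end{itemize}
It follows that $N_{e,y}$, $S_{e,y}(x)$, and the global prior $\mu_{e,0}$ (which uses only the $a_i^C$ and population constants) are unchanged at the corresponding role $\pi(y)$. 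The smoothed statistic $\bar a_{e,y}(x)$ in Eq.~\eqref{eq:local-smoothed-statistic} is the same deterministic function of these quantities, with the same case split. So $\bar a^\pi_{e,\pi(y)}(x)=\bar a_{e,y}(x)$.

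\textbf{Step 2: posterior-derived features.} For the rank, write $R^\pi_\omega(\pi(y);x)=1+\sum_{k}\1{p^\pi_\omega(k\mid x)>p^\pi_\omega(\pi(y)\mid x)}$ and substitute $k=\pi(k')$. This gives $R_\omega(y;x)$. The margin is handled the same way, since $\max_{k\neq\pi(y)}p^\pi_\omega(k\mid x)=\max_{k'\neq y}p_\omega(k'\mid x)$. The entropy and $p_{\omega,\max}$ are symmetric functions of the posterior vector, so they are unchanged. Hence the summary satisfies $u^\pi_{e,\pi(y)}(x)=u_{e,y}(x)$, and $\Gamma_\theta=\sigm(g_\theta(\cdot))$ inherits the identity because $g_\theta$ is shared across roles.

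\textbf{Step 3: the KNN estimator.} This is the step needing most care. The set of same-role items is the same index set $\{i:y_i^C=y\}$ before and after relabelling. Its ordering by similarity $s_i(x)$ is label-free, so the $\min(k,N_{e,y})$ nearest neighbours coincide. Ties must be broken by a label-independent rule, for example by index $i$, so that the selected set is identical. The empty-support fallback to $\mu_{e,0}$ is also invariant. Therefore $\Gamma_{\mathrm{KNN}}$ satisfies the claimed identity.

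\textbf{Step 4: plug-in quantities.} I reindex the role sum:
\[
\widehat q^\pi_\omega(x,\pi C_e)=\sum_{y'}p^\pi_\omega(y'\mid x)\Gamma^\pi(x,y',\pi C_e)=\sum_{y}p_\omega(y\mid x)\Gamma(x,y,C_e)=\widehat q_\omega(x,C_e),
\]
using the substitution $y'=\pi(y)$. Combined with the invariance of $p_{\omega,\max}$ from Step 2, the indicator in Eq.~\eqref{eq:qhat-plugin-rejector} is unchanged for every $\tau$. This is exactly coherent invariance in the sense of \cref{def:cr}.
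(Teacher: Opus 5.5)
Your proposal is correct and follows the paper's proof essentially step for step. Both arguments preserve the same-role masks, correctness indicators, and similarities, then the posterior-derived features at corresponding roles, then use the role-shared $g_\theta$, and finally reindex the role sum. Your explicit handling of the implemented top-$k$ KNN selection, including the requirement that ties be broken by a label-independent rule, is a small added precision that the paper covers only with ``follows immediately''.
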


\begin{proof}
For every context item $i$,
$\1{y_i^C=y}=\1{\pi(y_i^C)=\pi(y)}$ and
$\1{m_i^C=y_i^C}=\1{\pi(m_i^C)=\pi(y_i^C)}$. Image features and
query--context similarities are unchanged, while posterior equivariance
preserves posterior values, ranks, margins, and entropies at
corresponding roles. Therefore $N_{e,y}$, $S_{e,y}(x)$,
$\mu_{e,0}$, $\bar a_{e,y}(x)$, and the summary vector $u_{e,y}(x)$ are
unchanged after replacing $(y,C_e)$ by $(\pi(y),\pi C_e)$. The result for
\RACERKNN{} follows immediately. The result for \RACERKernel{} follows
because the same function $g_\theta$ is shared across all roles. Summing
over relabelled roles gives invariance of $\widehat q_\omega$, and the
maximum posterior is unchanged, so the binary rejector is invariant. Full proof in App. \ref{app:proof-cr-invariance}.
\end{proof}

This theorem concerns coherent relabelling of the task representation: all
labels, expert predictions, candidate roles, and class-indexed posteriors
are renamed together. It does not claim invariance to an expert-only
competence shift relative to fixed class names; in that case the Bayes
deferral action itself may change, and the context should cause the
competence estimate to change.

Identity-freeness does not imply query-independence. If two same-class
queries $x$ and $x'$ have different similarities to correct and incorrect
same-role context examples, then the kernel masses in
\eqref{eq:same-role-mass} and the smoothed statistic in
\eqref{eq:local-smoothed-statistic} can differ. Hence
$\Gamma(x,y,C_e)\neq\Gamma(x',y,C_e)$ even for the same candidate role
$y$. This is precisely the capacity missing from classwise profiles.

\subsection{Bayes-aligned deferral training}
\label{ssec:bayes-aligned-training}

The competence loss trains $\Gamma_\theta$ as a probability-scale expert
correctness estimate. To train an explicit deferral head while preserving
one-stage L2D co-adaptation, let $f_{\omega,y}(x)$ be class logits and
let $d_\vartheta(x,C_e)$ be an invariant deferral logit, for example a
function of $\widehat q_\omega(x,C_e)$, $p_{\omega,\max}(x)$, their
difference, and $\Hent(p_\omega(\cdot\mid x))$. In the implementation,
all these inputs are detached:
\[
d_\vartheta(x,C_e)\coloneqq D_\vartheta\!\left(
\sg\!\left[\widehat q_\omega,\ p_{\omega,\max},\
\widehat q_\omega-p_{\omega,\max},\ \Hent(p_\omega)\right]\right).
\]
Define
\begin{align*}
\Pi_y(x,C_e) &\coloneqq  \frac{\exp(f_{\omega,y}(x))}{\exp(d_\vartheta(x,C_e))+\sum_{k=1}^K\exp(f_{\omega,k}(x))}, \\[10pt]
\Pi_\perp(x,C_e) &\coloneqq  \frac{\exp(d_\vartheta(x,C_e))}{\exp(d_\vartheta(x,C_e))+\sum_{k=1}^K\exp(f_{\omega,k}(x))}.
\end{align*}
For a labelled example $(x,y)$, the context-only deferral loss is
\begin{equation*}
\label{eq:context-only-loss}
\calL_{\mathrm{RACER}}(x,y,C_e)
\coloneqq 
-\log \Pi_y(x,C_e)
-
\sg[\Gamma(x,y,C_e)]\log \Pi_\perp(x,C_e),
\end{equation*}
where $\sg[\cdot]$ denotes stop-gradient. Detaching both the loss weight
and the deferral-head inputs blocks routing-loss gradients to the
competence-head parameters $\theta$. The shared encoder and classifier still
receive routing gradients through the class logits, and competence BCE can
update the query representation. Thus this is parameter-level isolation of the
competence head, not a guarantee that joint training preserves its predictions
or calibration. The kernel training objective adds
$\lambda_{\mathrm{comp}}\calL_{\mathrm{comp}}$ to this routing loss.

\begin{theorem}[Bayes alignment]
\label{thm:bayes-alignment}
Let $\Gamma^\star(x,y,C_e)=\Prob(M_E=y\mid X=x,Y=y,C_E=C_e)$ and
$q^\star(x,C_e)=\sum_y\eta_y(x)\Gamma^\star(x,y,C_e)$. For fixed
$(x,C_e)$, the conditional minimizer of the idealized loss
$-\log\Pi_Y-\Gamma^\star(x,Y,C_e)\log\Pi_\perp$ over distributions on
$\Yperp$ satisfies
\[
\Pi_y^\star(x,C_e)=\frac{\eta_y(x)}{1+q^\star(x,C_e)},
\qquad
\Pi_\perp^\star(x,C_e)=\frac{q^\star(x,C_e)}{1+q^\star(x,C_e)}.
\]
Therefore $\Pi_\perp^\star(x,C_e)\ge\max_y\Pi_y^\star(x,C_e)$ if and
only if $q^\star(x,C_e)\ge\max_y\eta_y(x)$.
\end{theorem}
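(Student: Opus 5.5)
The plan is to reduce the statement to the weighted cross-entropy calculation already used for \eqref{eq:conditional-softmax-minimizer}. First we condition on $(X=x,C_E=C_e)$. The idealized loss depends on the random label only through $Y$. Under \Cref{assump:context-posterior-invariance}, $\Prob(Y=y\mid X=x,C_E=C_e)=\eta_y(x)$. The conditional risk of a distribution $\rho\in\Delta(\Yperp)$ is therefore
\[
\calS(\rho\mid x,C_e)=-\sum_{y\in\Y}\eta_y(x)\log\rho_y-\Big(\sum_{y\in\Y}\eta_y(x)\Gamma^\star(x,y,C_e)\Big)\log\rho_\perp
=-\sum_{y}\eta_y(x)\log\rho_y-q^\star(x,C_e)\log\rho_\perp .
\]
The second equality uses the definition of $q^\star$ in the statement. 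If one prefers not to invoke the assumption, the statement's own definition $q^\star=\sum_y\eta_y\Gamma^\star$ makes the weights consistent once $Y\mid x$ is taken to be distributed as $\eta(x)$; I will note this explicitly.

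Next, I minimize $-\sum_{a\in\Yperp}w_a\log\rho_a$ over the simplex, with nonnegative weights $w_y=\eta_y(x)$ and $w_\perp=q^\star$. The total weight is $W=1+q^\star>0$. Gibbs' inequality (nonnegativity of the KL divergence) shows that the unique minimizer is $\rho_a=w_a/W$. Concretely, the objective equals $W\cdot\mathrm{KL}(w/W\,\|\,\rho)$ plus a constant independent of $\rho$. Alternatively, a Lagrange-multiplier argument with strict convexity gives the same result. In either case we get $\Pi_y^\star=\eta_y/(1+q^\star)$ and $\Pi_\perp^\star=q^\star/(1+q^\star)$.

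Two boundary cases need care: zero weights, and terms $0\cdot\log 0$. I will adopt the convention $0\log 0=0$ and observe that coordinates with zero weight receive zero mass at the optimum. The KL formulation handles this cleanly, so I expect this to be the only minor obstacle.

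Finally, all coordinates share the positive denominator $1+q^\star$. Hence $\Pi_\perp^\star\ge\max_y\Pi_y^\star$ holds if and only if $q^\star\ge\max_y\eta_y(x)$, which is the Bayes rule \eqref{eq:pop-bayes-background} with the posterior-predictive correctness in place of the identity-level one.
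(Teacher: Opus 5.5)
Your proposal is correct and follows the paper's proof: you condition on $(x,C_e)$, collapse the $\Gamma^\star$-weighted deferral term into $q^\star(x,C_e)$, minimize the resulting weighted cross-entropy with total mass $1+q^\star(x,C_e)$, and cancel the common positive denominator to get the decision equivalence. Two small differences from the paper: your Gibbs/KL step replaces the paper's Lagrange-multiplier calculation and gives uniqueness and the zero-weight cases directly, and your explicit appeal to context-posterior invariance spells out what the paper uses implicitly when it averages over $Y$ with weights $\eta_y(x)$.
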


\begin{proof}
Condition on $(x,C_e)$. Taking expectation over $Y$ gives the conditional
risk
\[
-\sum\nolimits_y\eta_y(x)\log\Pi_y
-\left(\sum\nolimits_y\eta_y(x)\Gamma^\star(x,y,C_e)\right)\log\Pi_\perp
=
-\sum\nolimits_y\eta_y(x)\log\Pi_y-q^\star(x,C_e)\log\Pi_\perp.
\]
This is cross-entropy with nonnegative weights
$\{\eta_y(x)\}_{y\in\Y}$ and $q^\star(x,C_e)$, whose total mass is
$1+q^\star(x,C_e)$. Normalizing these weights gives the stated
minimizer. The decision equivalence follows by multiplying
$\Pi_\perp^\star\ge\max_y\Pi_y^\star$ by $1+q^\star(x,C_e)$. Full proof in \ref{app:proof-bayes-alignment}.
\end{proof}

Thus \RACER{} does not reject the Mozannar-style augmented surrogate. It
uses it at the decision layer, where it is appropriate, while the
expert-competence channel is separately constrained to estimate a
probability. The theorem is deliberately conditional: full statistical
consistency of an implemented system requires the classifier posterior,
competence estimator, model class, and optimization to approach their
population targets. \Cref{prop:plugin-regret} makes the plug-in
dependence explicit.

\subsection{Relation to existing methods}
\label{ssec:method-comparison}

\RACER{} can be viewed as interpolating between the strengths of the two
main baselines, L2D-Pop \citep{tailor2024population} and IFD
\citep{strong2026identity}. If the query-dependent kernel is removed and
$\Gamma(x,y,C_e)$ is replaced by a classwise posterior mean
$\theta_{e,y}$, the modelling assumption reduces to that of IFD. This has
low variance and exact invariance, but cannot adapt within class. If
instead the role-relative competence channel is replaced by a generic
latent context encoder trained only through the augmented softmax margin,
one obtains the L2D-Pop interface. This has high expressivity and strong
one-stage routing, but rolewise competence is not explicitly estimated and
the architecture need not be role-aligned.

\RACER{} keeps the useful part of both. Like L2D-Pop (QC), it is
query-conditioned. Like IFD, it removes absolute class-identity channels.
Unlike both, it exposes an explicit estimate of
$\Gamma^\star(x,y,C_e)=\Prob(M_E=y\mid X=x,Y=y,C_E=C_e)$, so calibration, budget ranking, and
multi-expert comparison can be based on a probability-scale estimate of
expert correctness rather than on a latent routing score.

\subsection{Multi-expert routing}
\label{ssec:multi-expert-routing}

If multiple experts are available with contexts $\{C_{e_j}\}_{j=1}^J$,
compute
\[
\widehat q_{\omega,j}(x)
\coloneqq 
\widehat q_\omega(x,C_{e_j})
=
\sum\nolimits_y p_\omega(y\mid x)\Gamma(x,y,C_{e_j})
\]
for each expert. With optional workload or cost penalty
$\lambda_j(x)\ge0$, choose
\[
j^\star(x)\coloneqq \argmax_j\{\widehat q_{\omega,j}(x)-\lambda_j(x)\}
\]
and defer only if
\[
\widehat q_{\omega,j^\star}(x)-\lambda_{j^\star}(x)-p_{\omega,\max}(x)
\ge\tau.
\]
Calibration is essential in this extension because the quantities
$\widehat q_{\omega,j}(x)$ must be comparable across experts and
contexts.

\section{Experiments}
\label{sec:experiments}

We evaluate whether a role-relative competence interface improves the two
quantities needed for population-adaptive deferral: budget-swept routing utility
and probability-scale expert-correctness estimation. The main experiments are
organized around three research questions:
\begin{enumerate}
    \item \emph{Can a method convert additional context for an unseen expert into better query-specific routing?}
    \item \emph{Does the method remain useful when the label space is large and expert competence varies across roles?}
    \item \emph{Do the resulting competence estimates behave like probabilities rather than only ranking scores?}
\end{enumerate}
A controlled context-posterior stress test for the optional \RACERC{} extension
is reported in \Cref{app:context-posterior-stress}. Full split details,
subtype construction, expert-generation parameters, and run counts are given in
Appendix~\ref{app:synthetic-protocol}.

\subsection{Evaluation setup}
\label{ssec:experimental-setup}

The synthetic benchmark has two main parts. The PathMNIST
\citep{pathmnist,medmnistv1,medmnistv2} context-scaling study fixes a locally
complementary expert setting and varies the context size, isolating whether a
router can exploit additional evidence when competence varies within class.
CIFAR-100 \citep{krizhevsky2009learning} stresses class cardinality and
synthetic expert variation by varying $K$, expert strength, hidden subtype dependence,
and expert-profile permutations.

The context-posterior
stress test is kept in the appendix because it is diagnostic rather than part
of the main method comparison: it asks when the more flexible \RACERC{}
extension is worth estimating.

Throughout the synthetic experiments, $K$ denotes the number of observable
classes and $B$ denotes the number of context examples supplied for the queried
expert. Expert competence can depend on a hidden within-class subtype. The
parameter $\rho$ controls this dependence: $\rho=0$ gives classwise competence,
where all subtypes of a class share the same accuracy, while $\rho=1$ gives full
hidden-subtype specialization. The nominal OOD permutation parameter
$\lambda_{\mathrm{id}}$ controls an expert-only permutation of the
class/subtype competence profile while image labels and classifier coordinates
remain fixed. The generator samples exchangeable class effects and independent
random subtype orderings for each class. Therefore a full permutation at
$\lambda_{\mathrm{id}}=1$ preserves the expert-profile population law, just as
$\lambda_{\mathrm{id}}=0$ does. We retain ``OOD'' as the recorded split label
for comparability with the saved results; these synthetic comparisons establish
performance on unseen simulated experts, not robustness to an established
expert-population distribution shift.

The
normalized context support $B/K$ is therefore the expected number of context
examples per observed role under a balanced context draw.

The real-expert benchmarks are VinDr-CXR \citep{PhysioNet-vindr-cxr-1.0.0} and
CheXpert \citep{irvin2019chexpert}, each of
which contains persistent multi-rater annotations for medical images. These
datasets are less controlled than the synthetic benchmark, but they test whether
the same role-relative interface remains useful when expert variability, label
noise, and context support are governed by real annotation processes rather than
by a simulator.

We follow \citet{strong2026identity} and use AURSAC, the area under the
system-accuracy curve obtained by sweeping the deferral threshold, as the primary
routing metric for multiclass tasks. Higher AURSAC means better routing over the
full range of deferral budgets. For multi-label chest radiography, we report
macro AURSBAC, the macro-average of per-pathology area under the system balanced
accuracy curve. For methods that expose a bounded expert-correctness probability
$\widehat q_\omega(x,C_e)$, we also report Brier score \cite{glenn1950verification} and 15-bin ECE \cite{naeini2015obtaining, guo2017calibrationmodernneuralnetworks} against
the realized event $\1{M_E=Y}$. L2D-Pop and classifier-confidence routers are
included in routing comparisons, but are omitted from calibration comparisons
because they do not directly output a finite expert-correctness probability.

RACER budget sweeps rank queries by estimated expert correctness minus
model confidence; the auxiliary deferral-head logit is used during training.
Appendix~\ref{app:implementation-evaluation} gives the method-specific losses,
weights, inference scores, and component updates, together with context sources,
context/query separation, and expert-selection protocols
(Tables~\ref{tab:implementation-protocol} and~\ref{tab:data-protocol}).

\subsection{Synthetic context scaling on PathMNIST}
\label{ssec:pathmnist-context-scaling}

The PathMNIST context-scaling experiment is the most direct test of the RACER
mechanism. We fix $K=9$, full subtype dependence ($\rho=1$), full expert-profile
permutation ($\lambda_{\mathrm{id}}=1$), and good/mid/bad subtype
accuracies $(0.98,0.70,0.30)$. The context size varies over
$B\in\{9,25,50,100,200,500,1000\}$, shown as $B/K$. At each context size, the
same context/query episodes are reused across methods and expert groups.
Each method is trained separately for each $B$; further sampling details are
in Appendix~\ref{app:pathmnist-context-protocol}.

\Cref{fig:pathmnist_context_gain} reports AURSAC gain over the
classifier-confidence router. At $B/K=1$, all adaptive methods are below the
classifier baseline, consistent with the fact that same-role context is too
sparse for reliable local competence estimation. As context grows, the methods
separate. \RACERKernel{} crosses above the classifier baseline around $B/K=5.6$
and shows increasing overall gain at the subsequent reported context sizes.
At $B/K=111$, it reaches an overall gain of $+0.0273$ AURSAC, with
$+0.0260$ on nominal unseen-OOD experts and $+0.0280$ on unseen-ID experts. \RACERKNN{}
also improves at large context, but reaches roughly half the final gain of the
learned kernel head. IFD-score and IFD-MLP improve mildly as their classwise
profiles become less noisy, while L2D-Pop QI/QC remain below the classifier
baseline throughout the sweep.

\begin{figure}[H]
\centering
\includegraphics[width=0.98\linewidth]{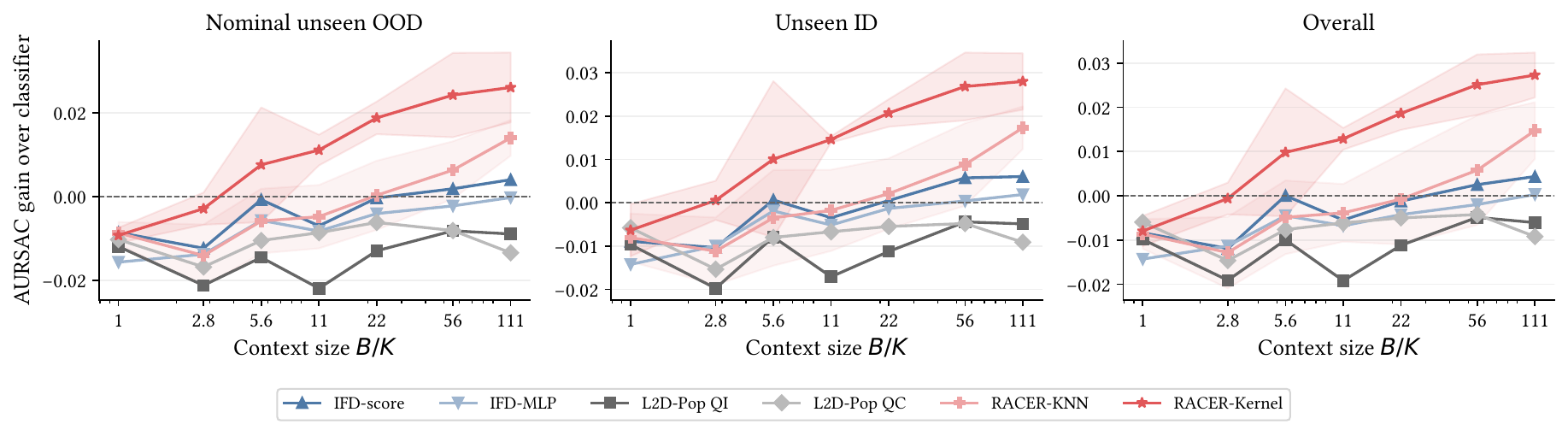}
\caption{PathMNIST context-scaling utility in the hardest locally complementary
synthetic setting ($\rho=1$, $\lambda_{\mathrm{id}}=1$). The y-axis shows AURSAC
gain over the classifier-confidence router; positive values indicate better
budget-swept routing than classifier uncertainty alone. The x-axis is normalized
context support $B/K$ with $K=9$. Shaded bands for RACER methods show one standard deviation over three seeds. We partition the results into (i) unseen OOD experts, (ii) unseen ID experts, and (iii) overall experts.}
\label{fig:pathmnist_context_gain}
\end{figure}

\Cref{fig:pathmnist_context_calibration} shows the corresponding calibration
curves. The Brier score follows the utility pattern: \RACERKernel{} is best
across the context range and improves from $0.2251$ at $B/K=1$ to $0.2029$ at
$B/K=111$. IFD-score and IFD-MLP approach Brier scores around $0.222$, while
\RACERKNN{} starts poorly at small context before converging to a similar range.
ECE is more nuanced. IFD-score and IFD-MLP have the lowest final ECE, while
\RACERKernel{} has slightly higher final ECE but substantially better Brier and
routing utility. In these results, smoother classwise
probabilities and query-conditioned estimates behave differently: a classwise profile can
be well bin-calibrated while remaining insufficiently discriminative for routing
when competence varies within class.

\begin{figure}[H]
\centering
\includegraphics[width=0.98\linewidth]{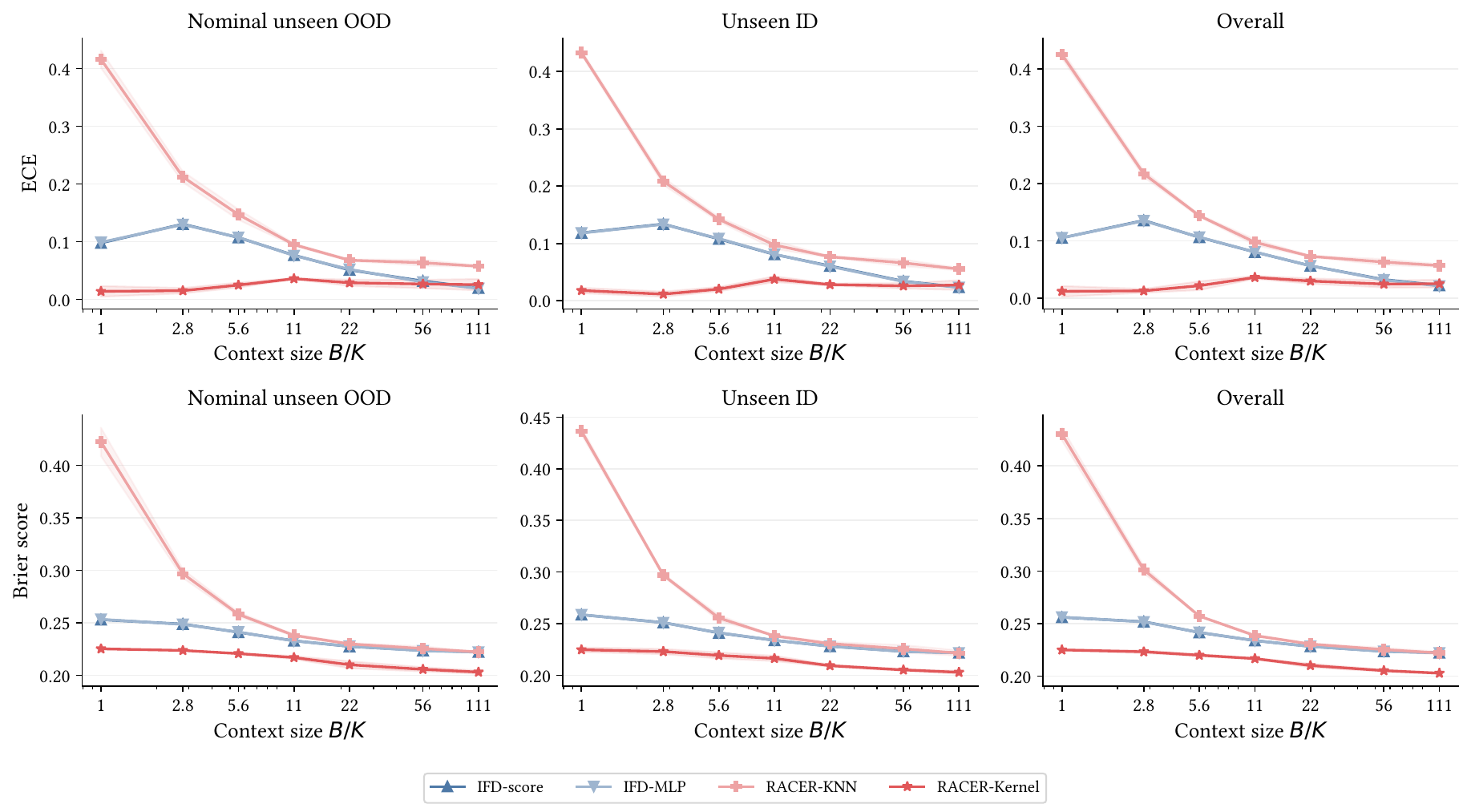}
\caption{PathMNIST context-scaling calibration. Mean ECE and Brier score are
reported for the plotted probability-output methods. RACER bands show one
standard deviation over three seeds. Lower is better.
The learned \RACERKernel{} head has the best Brier score among these plotted methods,
while classwise IFD variants have slightly lower final ECE but weaker routing
utility.}
\label{fig:pathmnist_context_calibration}
\end{figure}

This experiment explains why context size affects the interfaces differently.
Classwise profiles reduce variance as $B$ grows, but cannot remove within-class
bias. Latent population encoders observe the larger context, but in this stress
test do not convert it into a better routing margin. \RACERKernel{} uses the
additional evidence at the right level of structure: role-matched and query-near
context is smoothed by a role-shared competence head, yielding both lower Brier
score and larger routing gains. A representation-geometry ablation in
\Cref{app:representation-geometry-ablation} checks that this gain is not an
artifact of defining hidden subtypes in the same geometry used for local pooling:
\RACERKernel{} remains strongest when subtypes are generated from
classifier/auxiliary feature mixtures, while a random within-class subtype
negative control drives same-role neighbor purity to chance and removes the
\RACER{} gain.

To separate Racer's role-relative competence interface from the specific kernel-pooling estimator, we additionally evaluate \RACERDeepSets, a permutation-invariant neural competence estimator trained with the same BCE target and the same role-admissible features as RACER-kernel. \RACERDeepSets is a strong non-kernel baseline, but RACER-kernel remains best on PathMNIST context scaling, with monotonic context gains, higher final overall AURSAC gain at B=1000 (+0.0273 vs. +0.0194), and lower final Brier score (0.2029 vs. 0.2197). Full results are in App.  \ref{alb:deepsets}.

\subsection{Class-scale synthetic routing on CIFAR-100}
\label{ssec:synthetic-routing-results}

The CIFAR-100 experiments complement the PathMNIST sweep by stressing label-space
size and superclass-structured confusions. \Cref{tab:cifar_context} reports the
reference model, expert, and oracle accuracies. In the $K=20$ setting, the
classifier is already much stronger than the average expert, so the possible
improvement over classifier-confidence routing is modest. The $K=100$ setting is
more diagnostic: model and expert accuracy are closer, hidden subtype
specialization matters more, and the oracle gap leaves more room for better
routing.

\begin{table}[H]
\centering
\small
\setlength{\tabcolsep}{5pt}
\renewcommand{\arraystretch}{1.05}
\begin{tabular}{llccc}
\toprule
$K$ & Profile & Model acc. & Expert acc. & Oracle \\
\midrule
20 & Weak & $0.8609\pm0.0085$ & $0.6465\pm0.0113$ & $0.9005\pm0.0049$ \\
20 & Strong & $0.8629\pm0.0124$ & $0.6696\pm0.0164$ & $0.9095\pm0.0078$ \\
100 & Weak & $0.6560\pm0.0043$ & $0.6438\pm0.0072$ & $0.8255\pm0.0025$ \\
100 & Strong & $0.6548\pm0.0067$ & $0.6709\pm0.0171$ & $0.8362\pm0.0070$ \\
\bottomrule
\end{tabular}
\caption{OOD reference quantities for the synthetic CIFAR-100 experiments. The
oracle is an upper reference for routing and is not a learned method.
Variability is the standard deviation over four settings and three seeds.}
\label{tab:cifar_context}
\end{table}

\begin{figure}[H]
\centering
\includegraphics[width=0.95\linewidth]{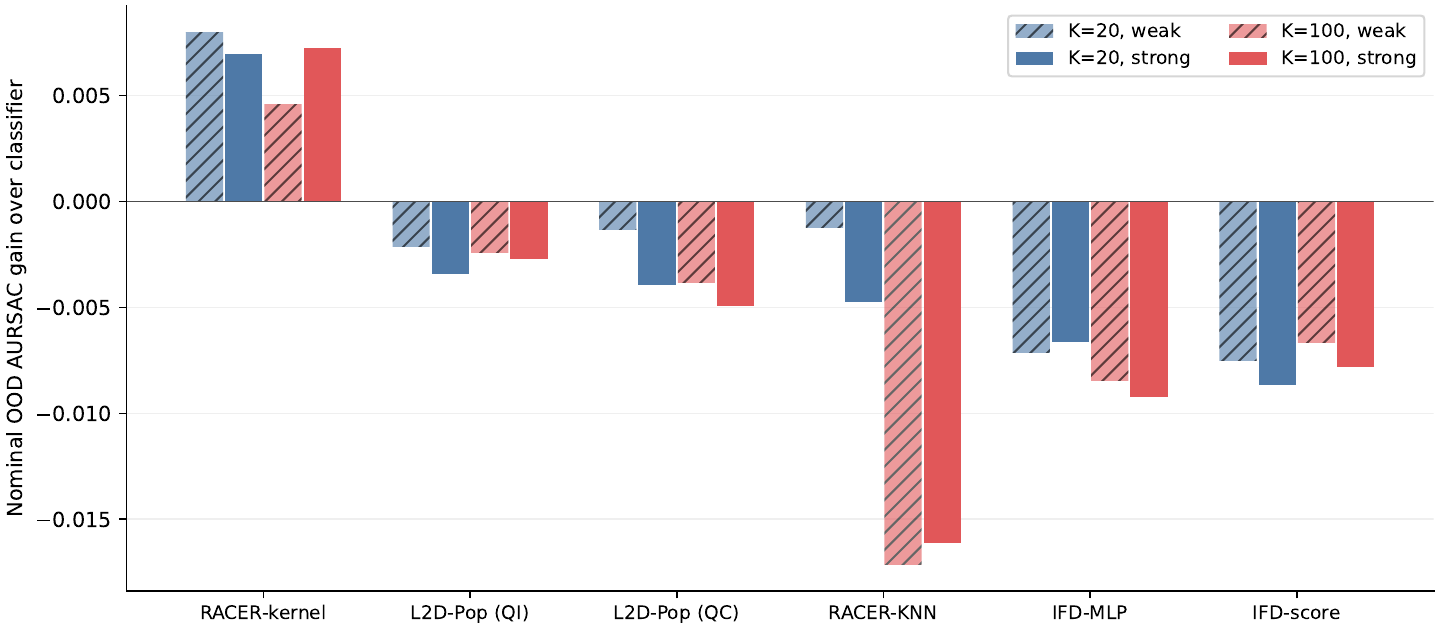}
\caption{OOD AURSAC gain over the classifier-confidence router. Across both
label-space sizes and both expert-strength profiles, \RACERKernel{} is the only
method with a positive aggregate OOD gain over the classifier-confidence
baseline. Bars show means over four settings and three seeds, without
inferential error bars. ``OOD'' denotes the recorded synthetic split.}
\label{fig:cifar_synthetic_ood_gain}
\end{figure}

\begin{table}[H]
\centering
\small
\setlength{\tabcolsep}{4pt}
\renewcommand{\arraystretch}{1.04}
\resizebox{\textwidth}{!}{%
\begin{tabular}{llccccc}
\toprule
$K$ & Method & Weak OOD & Strong OOD & $\Delta_{\mathrm{OOD}}$ & Weak overall & Strong overall \\
\midrule
\rowcolor{gray!8}\multirow{7}{*}{20}
& \RACERKernel{} \textit{(Ours)} & $\mathbf{0.8087}\pm0.0086$ & $\mathbf{0.8199}\pm0.0110$ & $+0.0112$ & $\mathbf{0.8072}\pm0.0032$ & $\mathbf{0.8222}\pm0.0105$ \\
& Classifier-conf. \textit{(Baseline)} & $0.8007\pm0.0063$ & $0.8130\pm0.0094$ & $+0.0122$ & $0.7996\pm0.0025$ & $0.8151\pm0.0085$ \\
& L2D-Pop (QI) \cite{tailor2024population} & $0.7986\pm0.0071$ & $0.8096\pm0.0091$ & $+0.0109$ & $0.7973\pm0.0021$ & $0.8124\pm0.0089$ \\
& L2D-Pop (QC) \cite{tailor2024population} & $0.7994\pm0.0083$ & $0.8090\pm0.0105$ & $+0.0096$ & $0.7983\pm0.0032$ & $0.8114\pm0.0094$ \\
& \RACERKNN{} \textit{(Ours)}& $0.7995\pm0.0075$ & $0.8082\pm0.0133$ & $+0.0087$ & $0.7969\pm0.0032$ & $0.8104\pm0.0132$ \\
& IFD-MLP \cite{strong2026identity} & $0.7936\pm0.0093$ & $0.8063\pm0.0099$ & $+0.0127$ & $0.7922\pm0.0055$ & $0.8078\pm0.0090$ \\
& IFD-score \cite{strong2026identity} & $0.7932\pm0.0078$ & $0.8043\pm0.0126$ & $+0.0111$ & $0.7907\pm0.0051$ & $0.8068\pm0.0120$ \\
\midrule
\rowcolor{gray!8}\multirow{7}{*}{100}
& \RACERKernel{} \textit{(Ours)} & $\mathbf{0.7299}\pm0.0050$ & $\mathbf{0.7461}\pm0.0091$ & $+0.0162$ & $\mathbf{0.7277}\pm0.0041$ & $\mathbf{0.7461}\pm0.0068$ \\
& Classifier-conf. \textit{(Baseline)}& $0.7253\pm0.0031$ & $0.7388\pm0.0084$ & $+0.0135$ & $0.7233\pm0.0034$ & $0.7391\pm0.0060$ \\
& L2D-Pop (QI) \cite{tailor2024population} & $0.7229\pm0.0043$ & $0.7361\pm0.0089$ & $+0.0132$ & $0.7207\pm0.0036$ & $0.7364\pm0.0071$ \\
& L2D-Pop (QC) \cite{tailor2024population} & $0.7215\pm0.0041$ & $0.7339\pm0.0099$ & $+0.0124$ & $0.7194\pm0.0045$ & $0.7342\pm0.0075$ \\
& IFD-score \cite{strong2026identity} & $0.7186\pm0.0040$ & $0.7310\pm0.0095$ & $+0.0124$ & $0.7165\pm0.0038$ & $0.7312\pm0.0083$ \\
& IFD-MLP \cite{strong2026identity} & $0.7169\pm0.0036$ & $0.7296\pm0.0097$ & $+0.0127$ & $0.7147\pm0.0030$ & $0.7297\pm0.0072$ \\
& \RACERKNN{} \textit{(Ours)}& $0.7082\pm0.0037$ & $0.7227\pm0.0093$ & $+0.0146$ & $0.7060\pm0.0032$ & $0.7227\pm0.0082$ \\
\bottomrule
\end{tabular}%
}
\caption{Synthetic CIFAR-100 routing utility. Entries are mean $\pm$ standard
deviation AURSAC over $4$ stress-test settings and $3$ seeds. The
``Classifier-conf.'' baseline sweeps a classifier-confidence deferral rule and
uses no expert context. $\Delta_{\mathrm{OOD}}$ is strong-minus-weak OOD
AURSAC.}
\label{tab:cifar_synthetic_utility}
\end{table}

\Cref{tab:cifar_synthetic_paired_gain} reports descriptive AURSAC differences
between \RACERKernel{} and classifier-confidence routing, matched by
$(\rho,\lambda_{\mathrm{id}},\mathrm{seed})$ within each $K$/profile block.
\RACERKernel{} is higher in 11/12, 11/12, 11/12, and 12/12 matched cells.
These cells share seeds and experimental factors; they are not asserted to be
independent replicates. We report no inferential tests or confidence intervals
for these comparisons in this checkpoint.

\begin{table}[H]
\centering
\small
\setlength{\tabcolsep}{6pt}
\renewcommand{\arraystretch}{1.04}
\begin{tabular}{llrc}
\toprule
$K$ & Expert profile & Cells & Mean $\Delta_{\mathrm{OOD}}$ \\
\midrule
20 & Weak & 12 & $+0.0080$ \\
20 & Strong & 12 & $+0.0070$ \\
100 & Weak & 12 & $+0.0046$ \\
100 & Strong & 12 & $+0.0072$ \\
\bottomrule
\end{tabular}
\caption{Descriptive paired CIFAR-100 nominal OOD AURSAC differences for \RACERKernel{} versus the
classifier-confidence router. $\Delta_{\mathrm{OOD}}$ is
AURSAC(\RACERKernel{}) minus AURSAC(classifier) on unseen-OOD experts.
Each block contains four settings and three seeds. The cell count describes
the experimental grid, not a count of independent replications.}
\label{tab:cifar_synthetic_paired_gain}
\end{table}

\Cref{fig:cifar_synthetic_ood_gain,tab:cifar_synthetic_utility,tab:cifar_synthetic_paired_gain}
support the
intended bias--variance role of the learned kernel head. \RACERKNN{} is
transparent and local, but its same-role neighborhoods become noisy when $K$ is
large and finite context support is sparse. \RACERKernel{} keeps the same
identity-free local evidence while learning how to smooth it using support,
similarity mass, posterior confidence, entropy, and global context accuracy. The
remaining oracle gap shows that finite-context instance-level competence
estimation is still the limiting factor, especially in the full $K=100$ label
space.

\subsection{CIFAR-100 expert-correctness calibration}
\label{ssec:synthetic-calibration-results}

Calibration on the CIFAR-100 stress tests asks whether the competence score can
be used as a probability, not merely as a ranking feature. This is central to
\RACER{} because the Bayes L2D decision compares model correctness and expert
correctness on a shared probability scale.

\begin{table}[H]
\centering
\footnotesize
\setlength{\tabcolsep}{5pt}
\renewcommand{\arraystretch}{1.1}

\begin{tabular}{llcccccc}
\toprule
\multirow{2}{*}{$K$} & \multirow{2}{*}{Profile}
& \multicolumn{2}{c}{IFD-score}
& \multicolumn{2}{c}{\RACERKNN{}}
& \multicolumn{2}{c}{\RACERKernel{}} \\
\cmidrule(lr){3-4}\cmidrule(lr){5-6}\cmidrule(lr){7-8}
& & Brier & ECE & Brier & ECE & Brier & ECE \\
\midrule
20  & Weak   & $0.2472$ & $0.1109$ & $0.2788$ & $0.1725$ & $\mathbf{0.2281}$ & $\mathbf{0.0209}$ \\
20  & Strong & $0.2397$ & $0.1106$ & $0.2727$ & $0.1753$ & $\mathbf{0.2206}$ & $\mathbf{0.0191}$ \\
100 & Weak   & $0.2504$ & $0.1126$ & $0.3274$ & $0.2500$ & $\mathbf{0.2293}$ & $\mathbf{0.0135}$ \\
100 & Strong & $0.2476$ & $0.1357$ & $0.3137$ & $0.2382$ & $\mathbf{0.2205}$ & $\mathbf{0.0133}$ \\
\bottomrule
\end{tabular}

\caption{OOD calibration of expert-correctness probabilities. Lower is better
for both Brier score and 15-bin ECE. Values are averaged over all
$(\rho,\lambda_{\mathrm{id}})$ settings and seeds.}
\label{tab:cifar_calibration}
\end{table}

\RACERKernel{} has the lowest Brier score and ECE in every $K$/profile setting,
with ECE between $0.013$ and $0.021$. Its mean predicted expert correctness is
also close to realized expert accuracy: for strong OOD experts,
$\widehat q_\omega=0.6712$ versus expert accuracy $0.6696$ at $K=20$, and
$\widehat q_\omega=0.6745$ versus $0.6709$ at $K=100$. IFD-score is
systematically under-confident in the high-cardinality strong-expert setting,
predicting $0.5352$ mean correctness against realized accuracy $0.6709$.
\RACERKNN{} is closer to unbiased on average but has poor bin-wise calibration,
which explains its high ECE and weak routing in \Cref{tab:cifar_synthetic_utility}.
Overall, local role-relative evidence is useful, but high-cardinality
finite-context deferral requires learned smoothing and calibration to turn that
evidence into a reliable probability-scale competence estimate.

\subsection{VinDr-CXR: real radiologist deferral}
\label{ssec:vindr-results}

\begin{figure}[t]
\centering
\includegraphics[width=\linewidth]{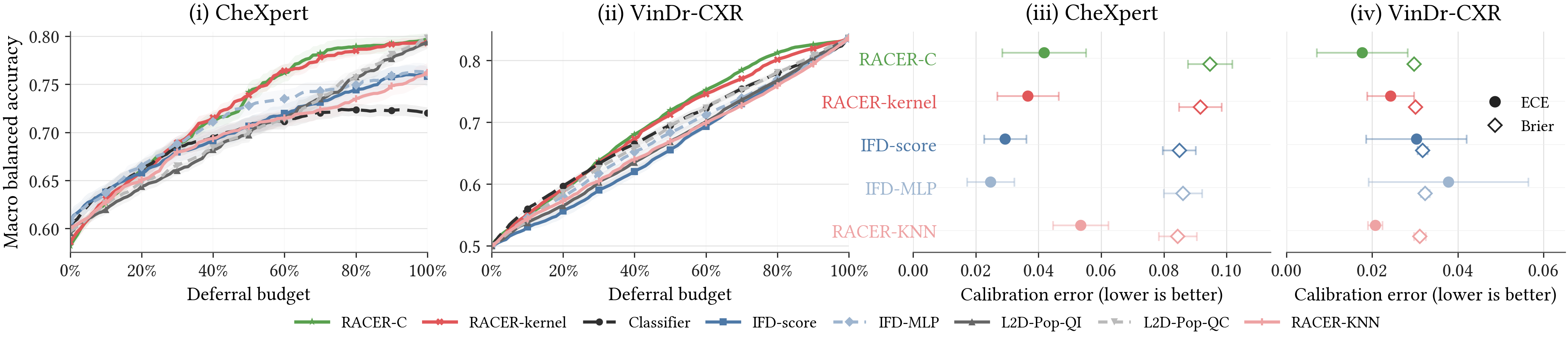}
\caption{Real-expert chest-radiography results. Panels (i)--(ii) show macro
balanced accuracy as the deferral budget increases; curves are means and shaded
bands denote standard errors over matched runs. Panels (iii)--(iv) summarize
expert-correctness calibration for methods with probability-scale outputs.}
\label{fig:chexpert_vindr_deferral_calibration}
\end{figure}

VinDr-CXR is a real-radiologist, multi-label deferral benchmark. We treat each
finding as a binary-relevance task and report macro AURSBAC. Because the
official VinDr-CXR test labels are consensus-only and have no radiologist
identity, all L2D evaluation uses the multi-radiologist training annotations:
for expert $e$, the deferred prediction is $e$'s own label and the target is the
leave-one-radiologist-out consensus of the other two readers. We use the
co-annotation-cluster holdout, which treats the dominant R8/R9/R10
cluster as the OOD expert group. No hospital shift is verified. Outcomes
measure agreement with the other two readers, with a pathology masked when
those readers disagree; they do not measure independent clinical ground truth. Protocol details, masking rules,
and the co-annotation counts are in \Cref{app:real-protocol-details}.

\Cref{tab:vindr-results} gives the five-seed summary. The pooled ``Overall''
metric sweeps one global deferral budget over all valid rows; the group columns
sweep the budget within each expert group. \RACERC{}
has the best pooled/global-budget AURSBAC, the lowest Brier score and ECE, and a
positive context-posterior NLL gain, indicating that its posterior adapter
learns useful case-mix information on average. However, \RACERKernel{} is
stronger in the group-conditional seen and unseen-OOD columns, and its
unreported group-weighted AURSBAC is higher ($0.7012$ versus $0.6950$ for
\RACERC{}). Thus \RACERC{} is useful when a single global budget may be
allocated unevenly across the pooled population, while \RACERKernel{} remains
the more robust default under group-conditional expert-shift evaluation.

The classifier-confidence baseline is also intentionally strong on VinDr-CXR.
This is not surprising: the target radiologist is very accurate in this
leave-one-radiologist-out construction, so simply deferring uncertain classifier
cases to that radiologist is already a competitive policy. In the five-seed
evaluation, the deferred expert has $97.00\pm0.03\%$ raw accuracy overall,
$98.62\pm0.41\%$ on unseen-ID cases, and $96.28\pm0.04\%$ on unseen-OOD cases,
whereas the image-only classifier is $90.26\pm0.04\%$ overall and
$87.80\pm0.06\%$ on unseen-OOD cases. The classifier baseline's strong OOD
performance should therefore be read as a property of the benchmark: when the
available expert is already highly reliable, the marginal room for learned
expert-specific routing is small. Detailed expert-accuracy diagnostics are in
\Cref{tab:vindr_expert_accuracy}.

\begin{table*}[t]
\centering
\scriptsize
\setlength{\tabcolsep}{1.6pt}
\renewcommand{\arraystretch}{1.08}
\begin{tabular*}{\textwidth}{@{\extracolsep{\fill}}lccccccc@{}}
\toprule
Method & Pooled & Seen & ID & OOD & Brier $\downarrow$ & ECE $\downarrow$ & Ctx. \\
\midrule
\RACERC{}
& \best{$0.7019\pm0.0016$}
& $0.7373\pm0.0180$
& $0.6359\pm0.0363$
& $0.6841\pm0.0040$
& \best{$0.0297\pm0.0004$}
& \best{$0.0177\pm0.0106$}
& \best{$0.0542\pm0.0250$} \\
\RACERKernel{}
& \second{$0.6973\pm0.0041$}
& \best{$0.7436\pm0.0155$}
& $0.6811\pm0.0413$
& \best{$0.6901\pm0.0057$}
& \second{$0.0301\pm0.0003$}
& $0.0243\pm0.0055$
& $0.0000\pm0.0000$ \\
Classifier-conf.
& $0.6878\pm0.0078$
& $0.7057\pm0.0152$
& \best{$0.7083\pm0.0466$}
& \second{$0.6898\pm0.0077$}
& -- & -- & $0.0000\pm0.0000$ \\
L2D-QC \citep{tailor2024population}
& $0.6839\pm0.0128$
& $0.7005\pm0.0605$
& $0.6806\pm0.0330$
& $0.6787\pm0.0110$
& -- & -- & $0.0000\pm0.0000$ \\
IFD-MLP \citep{strong2026identity}
& $0.6761\pm0.0098$
& $0.6749\pm0.0516$
& $0.6868\pm0.0629$
& $0.6832\pm0.0082$
& $0.0324\pm0.0010$
& $0.0378\pm0.0186$
& $0.0000\pm0.0000$ \\
L2D-QI \citep{tailor2024population}
& $0.6685\pm0.0174$
& $0.6788\pm0.0580$
& $0.6942\pm0.0478$
& $0.6805\pm0.0054$
& -- & -- & $0.0000\pm0.0000$ \\
\RACERKNN{}
& $0.6680\pm0.0028$
& $0.6779\pm0.0234$
& $0.6499\pm0.0191$
& $0.6768\pm0.0024$
& $0.0311\pm0.0014$
& \second{$0.0207\pm0.0017$}
& $0.0000\pm0.0000$ \\
IFD-score \citep{strong2026identity}
& $0.6606\pm0.0100$
& $0.6487\pm0.0853$
& $0.7055\pm0.0847$
& $0.6813\pm0.0059$
& $0.0318\pm0.0012$
& $0.0303\pm0.0117$
& $0.0000\pm0.0000$ \\
\bottomrule
\end{tabular*}
\caption{VinDr-CXR real-radiologist deferral. Overall AURSBAC is pooled over all
valid rows and therefore corresponds to one global deferral budget. Seen,
unseen-ID, and unseen-OOD columns compute group-conditional AURSBAC. Brier and
ECE evaluate expert-correctness probabilities and are omitted for methods that
only output routing scores. Ctx. gain is the label-NLL improvement of
$\pi_\rho(y\mid x,C_e)$ over the image-only posterior.}
\label{tab:vindr-results}
\end{table*}

\subsection{CheXpert: multi-expert human--AI deferral}
\label{ssec:chexpert-multiexpert-results}

CheXpert evaluates a different real-world regime: multiple deferral targets are
available at once, including human readers and external AI systems. The AI
experts are CARZero, KAD, and CheXZero. Each run chooses one candidate human
reader, includes two external AI systems as seen experts, and holds out the
third AI system until test time. The
primary metric is seen-plus-unseen macro AURSBAC. We use three candidate human
readers, three held-out AI choices, and three seeds, yielding 27 matched cells
per method. Details of the consensus construction, expert pools, and held-out
AI protocol are in \Cref{app:real-protocol-details}.

\Cref{tab:chexpert-multiexpert-results} reports the compact primary table. The
two learned \RACER{} variants are the strongest by seen-plus-unseen AURSBAC,
with nearly identical means. \RACERKernel{} is slightly stronger on seen experts
and remains our main method because it improves balanced-accuracy routing
without estimating the additional context-conditioned label posterior. \RACERC{}
has a similar pooled mean. Both learned variants have higher mean AURSBAC
than the IFD baselines in this table. These are descriptive comparisons;
this version makes no significance claim. The extended metrics are in
\Cref{app:real-extended-tables}.

\begin{table*}[t]
\centering
\small
\setlength{\tabcolsep}{4.5pt}
\renewcommand{\arraystretch}{1.08}
\begin{tabular*}{\textwidth}{@{\extracolsep{\fill}}lcccc@{}}
\toprule
Method & Seen+unseen AURSBAC & Seen AURSBAC & Brier $\downarrow$ & ECE $\downarrow$ \\
\midrule
\RACERC{} & \best{$0.7243\pm0.0341$} & $0.7244\pm0.0328$ & $0.0947\pm0.0071$ & $0.0418\pm0.0134$ \\
\RACERKernel{} & \second{$0.7241\pm0.0259$} & \best{$0.7276\pm0.0224$} & $0.0916\pm0.0068$ & $0.0366\pm0.0098$ \\
IFD-MLP \citep{strong2026identity} & $0.7105\pm0.0405$ & $0.7139\pm0.0349$ & $0.0861\pm0.0061$ & \best{$0.0247\pm0.0075$} \\
IFD-score \citep{strong2026identity} & $0.7013\pm0.0444$ & $0.7092\pm0.0431$ & $0.0849\pm0.0052$ & \second{$0.0293\pm0.0067$} \\
L2D-QC \citep{tailor2024population} & $0.7007\pm0.0219$ & $0.7026\pm0.0202$ & -- & -- \\
L2D-QI \citep{tailor2024population} & $0.6986\pm0.0265$ & $0.7029\pm0.0224$ & -- & -- \\
\RACERKNN{} & $0.6956\pm0.0360$ & $0.6987\pm0.0348$ & \best{$0.0844\pm0.0061$} & $0.0534\pm0.0088$ \\
Classifier-conf. & $0.6931\pm0.0360$ & $0.7001\pm0.0340$ & -- & -- \\
\bottomrule
\end{tabular*}
\caption{CheXpert multi-expert human--AI deferral. Values are mean $\pm$
standard deviation over 27 matched cells. Seen+unseen AURSBAC is the primary
metric; Brier and ECE are reported only for probability-output methods
and concern each method's selected expert, not common expert--query pairs.}
\label{tab:chexpert-multiexpert-results}
\end{table*}

\subsection{Experimental summary, limitations, and future directions}
\label{ssec:experimental-summary-limitations}

The experiments support three main conclusions. First, role-relative competence
estimation is most useful when expert skill has local structure that classifier
confidence alone cannot capture. This is clearest in the PathMNIST context sweep,
where \RACERKernel{} converts increasing
context into positive routing gain under hidden subtype dependence. Second,
identity-free local evidence must be smoothed carefully in high-cardinality
settings: \RACERKNN{} becomes noisy as same-role support shrinks, while
\RACERKernel{} remains the strongest aggregate router on the nominal OOD split of CIFAR-100 for both
$K=20$ and $K=100$. Third, probability-scale competence estimation is not the same
as maximizing a routing score. The learned kernel head gives the best Brier and
ECE on the CIFAR-100 calibration stress tests, whereas smoother classwise
profiles can have slightly lower ECE but weaker routing utility in the PathMNIST
large-context regime. The context-posterior stress test in
\Cref{app:context-posterior-stress} confirms this choice: when
$\lambda_{\mathrm{ctx}}=0$, \RACERC{} is essentially tied with
\RACERKernel{} (AURSAC $0.8157$ versus $0.8149$), while at
$\lambda_{\mathrm{ctx}}=1$, where context is maximally informative about case
mix, \RACERC{} improves AURSAC by $0.0131$ and reduces label NLL by $0.1154$.

Several limitations remain. The method relies on an informative query--context
geometry; if the image representation does not organize examples by the factors
that drive expert competence, local role-relative pooling cannot recover the
right competence law. Finite context is also a fundamental constraint,
especially when $K$ is large and the expected same-role support $B/K$ is small.
The real-expert benchmarks are valuable because they contain persistent raters,
but they cover a limited range of clinical tasks and annotation protocols, and
classifier confidence remains competitive. The higher reported mean for
\RACERKernel{} does not by itself establish a statistically reliable advantage;
this checkpoint reports descriptive comparisons without significance tests. Finally, calibration remains metric-dependent:
Brier score rewards sharp useful probabilities, while ECE can favor smoother but
less discriminative estimates.

\paragraph{Code availability.}
The code will be released upon publication.

\bibliographystyle{plainnat}
\bibliography{sample-base}
\newpage
\appendix
\renewcommand{\contentsname}{Contents of Appendix}
\tableofcontents
\addtocontents{toc}{\protect\setcounter{tocdepth}{3}}

\clearpage

\section{Detailed Proof Derivations}
\label{app:detailed-proofs}

This appendix expands the short proof sketches in the main text. The statements
and assumptions are the same as in the corresponding main-body propositions and
theorems; the goal here is only to show the intermediate algebra.

\subsection{Conditional minimizer of the augmented L2D surrogate}
\label{app:proof-augmented-softmax}

For completeness, we first derive the conditional minimizer used in
\Cref{ssec:l2d}. Fix $x$ and write $q(x)=\Prob(M=Y\mid X=x)$. The conditional
augmented-softmax risk is
\[
\calS(\Pi\mid x)
=
-\sum_{y\in\Y}\eta_y(x)\log \Pi_y
-q(x)\log \Pi_\perp,
\]
where $\Pi\in\Delta(\Yperp)$. Define nonnegative weights
\[
w_y\coloneqq \eta_y(x),\qquad w_\perp\coloneqq q(x),
\]
with total mass
\[
W\coloneqq \sum_y w_y+w_\perp=1+q(x).
\]
The optimization problem is
\[
\min_{\Pi_a\ge0,\,\sum_a\Pi_a=1}
-
\sum_{a\in\Yperp} w_a\log \Pi_a .
\]
For coordinates with $w_a>0$, the Lagrangian is
\[
\mathcal J(\Pi,\lambda)
\coloneqq  -\sum_a w_a\log\Pi_a + \lambda\left(\sum_a\Pi_a-1\right).
\]
The stationarity condition gives
\[
\frac{\partial \mathcal J}{\partial \Pi_a}
= -\frac{w_a}{\Pi_a}+\lambda=0,
\qquad\text{so}\qquad
\Pi_a=\frac{w_a}{\lambda}.
\]
Summing over $a$ and using $\sum_a\Pi_a=1$ yields
\[
1=\sum_a \frac{w_a}{\lambda}=\frac{W}{\lambda},
\qquad\text{hence}\qquad
\lambda=W=1+q(x).
\]
Therefore
\[
\Pi_y^\star(x)=\frac{\eta_y(x)}{1+q(x)},
\qquad
\Pi_\perp^\star(x)=\frac{q(x)}{1+q(x)}.
\]
Coordinates with zero weight receive zero probability in the closure of the
simplex; the same formula applies by continuity. The induced decision is
Bayes-aligned because
\[
\Pi_\perp^\star(x)\ge \max_y\Pi_y^\star(x)
\quad\Longleftrightarrow\quad
\frac{q(x)}{1+q(x)}\ge
\frac{\max_y\eta_y(x)}{1+q(x)}
\quad\Longleftrightarrow\quad
q(x)\ge \max_y\eta_y(x).
\]

\subsection{Expanded proof of \Cref{prop:proper-competence}}
\label{app:proof-proper-competence}

Let $A=\1{M_E=Y}$ and let $U=T(X,Y,C_E,p_\omega)$ be the information exposed to a
scalar predictor $g(U)\in(0,1)$. The binary cross-entropy risk is
\[
\calL(g)
\coloneqq 
\Ex\left[-A\log g(U)-(1-A)\log(1-g(U))\right].
\]
Using the tower property, this risk decomposes pointwise in $U$:
\[
\calL(g)
=
\Ex_U\left[
\Ex\left[-A\log g(U)-(1-A)\log(1-g(U))\mid U\right]
\right].
\]
Fix a value $U=u$ and define
\[
\alpha(u)\coloneqq \Prob(A=1\mid U=u).
\]
For a scalar prediction $s\in(0,1)$, the conditional risk is
\[
\ell_u(s)
\coloneqq  -\alpha(u)\log s -(1-\alpha(u))\log(1-s).
\]
Its first derivative is
\[
\ell_u'(s)
= -\frac{\alpha(u)}{s}+\frac{1-\alpha(u)}{1-s},
\]
and the stationarity condition $\ell_u'(s)=0$ gives
\[
\frac{\alpha(u)}{s}
=
\frac{1-\alpha(u)}{1-s}
\quad\Longleftrightarrow\quad
\alpha(u)(1-s)=(1-\alpha(u))s
\quad\Longleftrightarrow\quad
s=\alpha(u).
\]
The second derivative is
\[
\ell_u''(s)
=
\frac{\alpha(u)}{s^2}+\frac{1-\alpha(u)}{(1-s)^2}>0
\]
for $s\in(0,1)$ whenever $0<\alpha(u)<1$, so this stationary point is the unique
minimizer. If $\alpha(u)\in\{0,1\}$, the infimum is attained at the corresponding
boundary value by continuity. Hence the population minimizer over measurable
functions of $U$ is
\[
g^\star(U)=\Prob(A=1\mid U).
\]
When $U=(X,Y,C_E)$, this becomes
\[
g^\star(x,y,C_e)
=
\Prob(M_E=Y\mid X=x,Y=y,C_E=C_e).
\]
On the event $Y=y$, expert correctness $M_E=Y$ is the event $M_E=y$, so
\[
\Prob(M_E=Y\mid X=x,Y=y,C_E=C_e)
=
\Prob(M_E=y\mid X=x,Y=y,C_E=C_e)
=
\Gamma^\star(x,y,C_e).
\]
Thus binary cross-entropy is a strictly proper loss for the rolewise competence
probability, and a restricted summary $U=u_{e,y}(x)$ targets the corresponding
conditional projection $\Prob(A=1\mid u_{e,y}(x))$.

\subsection{Expanded proof of \Cref{prop:qhat-calibration}}
\label{app:proof-plugin-calibration}

By definition,
\[
q^\star(x,C_e)=\Prob(M_E=Y\mid X=x,C_E=C_e).
\]
Condition on the unknown true class and apply the law of total probability:
\[
q^\star(x,C_e)
=
\sum_{y\in\Y}
\Prob(Y=y\mid X=x,C_E=C_e)
\Prob(M_E=Y\mid X=x,Y=y,C_E=C_e).
\]
On the event $Y=y$, the event $M_E=Y$ is the same as $M_E=y$, so
\[
\Prob(M_E=Y\mid X=x,Y=y,C_E=C_e)
=
\Prob(M_E=y\mid X=x,Y=y,C_E=C_e)
=
\Gamma^\star(x,y,C_e).
\]
Under \Cref{assump:context-posterior-invariance},
\[
\Prob(Y=y\mid X=x,C_E=C_e)=\Prob(Y=y\mid X=x)=\eta_y(x).
\]
Therefore
\[
q^\star(x,C_e)
=
\sum_y \eta_y(x)\Gamma^\star(x,y,C_e).
\]
If $p_\omega(y\mid x)=\eta_y(x)$ and
$\Gamma(x,y,C_e)=\Gamma^\star(x,y,C_e)$ for all $y$, then the plug-in estimate
satisfies
\[
\widehat q_\omega(x,C_e)
=
\sum_y p_\omega(y\mid x)\Gamma(x,y,C_e)
=
\sum_y \eta_y(x)\Gamma^\star(x,y,C_e)
=
q^\star(x,C_e).
\]
This proves that the induced expert-correctness estimate is calibrated with
respect to the information $(x,C_e)$ when both the classifier posterior and the
rolewise competence functional equal their respective conditional probability targets.

\subsection{Expanded proof of \Cref{prop:plugin-regret}}
\label{app:proof-plugin-regret}

Fix $(X=x,C_E=C_e)$ and abbreviate
\[
q=q^\star(x,C_e),\qquad
\hat q=\widehat q_\omega(x,C_e),\qquad
\eta_{\max}=\max_y\eta_y(x),\qquad
\hat p_{\max}=p_{\omega,\max}(x).
\]
Let $y^\star\in\argmax_y\eta_y(x)$ and
$\hat y\coloneqq h_\omega(x)\in\argmax_y p_\omega(y\mid x)$. The Bayes conditional
correctness is
\[
\max\{\eta_{\max},q\}.
\]
The plug-in system predicts class $\hat y$ when
$\hat q<\hat p_{\max}$ and defers otherwise, so its conditional correctness is
\[
(1-\widehat r_0)\eta_{\hat y}+\widehat r_0 q.
\]
We first separate classifier error from routing error. If the plug-in routing
decision agrees with the Bayes comparison between $q$ and $\eta_{\max}$, then the
only possible loss relative to Bayes occurs when both systems predict, in which
case the plug-in classifier receives $\eta_{\hat y}$ instead of $\eta_{\max}$.
If the routing signs disagree, the additional loss is at most
$|q-\eta_{\max}|$. Hence the conditional excess risk is bounded by
\[
\eta_{\max}-\eta_{\hat y}
+
|q-\eta_{\max}|
\1{\operatorname{sign}(q-\eta_{\max})\neq
\operatorname{sign}(\hat q-\hat p_{\max})}.
\]
The classifier term is controlled by posterior error. Since
$p_\omega(\hat y\mid x)\ge p_\omega(y^\star\mid x)$,
\[
\begin{aligned}
\eta_{\max}-\eta_{\hat y}
&=(\eta_{y^\star}-p_{y^\star})+(p_{y^\star}-p_{\hat y})+(p_{\hat y}-\eta_{\hat y})\\
&\le |\eta_{y^\star}-p_{y^\star}|+|p_{\hat y}-\eta_{\hat y}|\\
&\le \|p_\omega(\cdot\mid x)-\eta(\cdot\mid x)\|_1.
\end{aligned}
\]
On the sign-disagreement event, the two real numbers
$q-\eta_{\max}$ and $\hat q-\hat p_{\max}$ have opposite signs or one is zero.
Therefore the distance from $q-\eta_{\max}$ to zero is no larger than its
distance to $\hat q-\hat p_{\max}$:
\[
|q-\eta_{\max}|
\le
|(q-\eta_{\max})-(\hat q-\hat p_{\max})|
\le
|q-
\hat q|+|\eta_{\max}-\hat p_{\max}|.
\]
The maximum posterior map is 1-Lipschitz in $\ell_\infty$, hence also in
$\ell_1$:
\[
|\eta_{\max}-\hat p_{\max}|
\le
\|p_\omega(\cdot\mid x)-\eta(\cdot\mid x)\|_\infty
\le
\|p_\omega(\cdot\mid x)-\eta(\cdot\mid x)\|_1.
\]
It remains to bound $|\hat q-q|$. Write
\[
\hat q-q
=
\sum_y p_y\Gamma_y-
\sum_y \eta_y\Gamma_y^\star
=
\sum_y(p_y-\eta_y)\Gamma_y
+
\sum_y\eta_y(\Gamma_y-\Gamma_y^\star),
\]
where $p_y\coloneqq p_\omega(y\mid x)$, $\Gamma_y\coloneqq \Gamma(x,y,C_e)$, and
$\Gamma_y^\star\coloneqq \Gamma^\star(x,y,C_e)$. Since $0\le\Gamma_y\le1$ and
$\sum_y\eta_y=1$,
\[
|\hat q-q|
\le
\sum_y |p_y-\eta_y|\,|\Gamma_y|
+
\sum_y\eta_y|\Gamma_y-\Gamma_y^\star|
\le
\|p_\omega(\cdot\mid x)-\eta(\cdot\mid x)\|_1
+
\sum_y\eta_y|\Gamma_y-\Gamma_y^\star|.
\]
Combining the three bounds gives the conditional excess risk
\[
\le
3\|p_\omega(\cdot\mid x)-\eta(\cdot\mid x)\|_1
+
\sum_y\eta_y(x)|\Gamma(x,y,C_e)-\Gamma^\star(x,y,C_e)|.
\]
Taking expectation over $(X,C_E)$ proves \Cref{prop:plugin-regret}.

\subsection{Expanded proof of \Cref{thm:cr-invariance}}
\label{app:proof-cr-invariance}

Fix a coherent class relabelling $\pi\in\mathfrak S_K$. The relabelled context is
\[
\pi C_e=\{(x_i^C,\pi(y_i^C),\pi(m_i^C))\}_{i=1}^B,
\]
and posterior equivariance means
\[
p_\omega^\pi(\pi(y)\mid x)=p_\omega(y\mid x)
\]
for every role $y$. We verify each ingredient of the RACER summaries.

First, same-role masks are preserved:
\[
\1{\pi(y_i^C)=\pi(y)}=\1{y_i^C=y}.
\]
Context correctness is also preserved:
\[
\1{\pi(m_i^C)=\pi(y_i^C)}=\1{m_i^C=y_i^C}.
\]
Image features and query--context similarities do not involve class names, so
$K_i(x)$ is unchanged. It follows that
\[
N_{e,\pi(y)}^\pi
=
\sum_i\1{\pi(y_i^C)=\pi(y)}
=
\sum_i\1{y_i^C=y}
=N_{e,y},
\]
and
\[
S_{e,\pi(y)}^\pi(x)
=
\sum_i\1{\pi(y_i^C)=\pi(y)}K_i(x)
=
\sum_i\1{y_i^C=y}K_i(x)
=S_{e,y}(x).
\]
The global context prior $\mu_{e,0}$ depends only on the preserved correctness
indicators and the context size, so it is unchanged. Therefore the smoothed local
statistic obeys
\[
\bar a_{e,\pi(y)}^\pi(x)
=
\frac{\alpha_0\mu_{e,0}+\sum_i\1{\pi(y_i^C)=\pi(y)}K_i(x)\1{\pi(m_i^C)=\pi(y_i^C)}}
{\alpha_0+S_{e,\pi(y)}^\pi(x)}
=
\bar a_{e,y}(x).
\]
Posterior equivariance preserves the role posterior value,
\[
p_\omega^\pi(\pi(y)\mid x)=p_\omega(y\mid x),
\]
the rank of the role,
\[
R_\omega^\pi(\pi(y);x)=R_\omega(y;x),
\]
the role margin, and the entropy of the posterior vector. Hence the full summary
satisfies
\[
u_{e,\pi(y)}^\pi(x)=u_{e,y}(x).
\]
For \RACERKNN{}, this immediately gives
\[
\Gamma_{\mathrm{KNN}}^\pi(x,\pi(y),\pi C_e)=\Gamma_{\mathrm{KNN}}(x,y,C_e).
\]
For \RACERKernel{}, the same equality holds because the MLP $g_\theta$ is shared
across roles:
\[
\Gamma_\theta^\pi(x,\pi(y),\pi C_e)
=
\sigm(g_\theta(u_{e,\pi(y)}^\pi(x)))
=
\sigm(g_\theta(u_{e,y}(x)))
=
\Gamma_\theta(x,y,C_e).
\]
Finally, change variables $y'=\pi(y)$ in the expert-correctness marginal:
\[
\begin{aligned}
\widehat q_\omega^\pi(x,\pi C_e)
&=
\sum_{y'\in\Y} p_\omega^\pi(y'\mid x)\,
\Gamma^\pi(x,y',\pi C_e) \\
&=
\sum_{y\in\Y} p_\omega^\pi(\pi(y)\mid x)\,
\Gamma^\pi(x,\pi(y),\pi C_e) \\
&=
\sum_{y\in\Y} p_\omega(y\mid x)\,
\Gamma(x,y,C_e) \\
&=
\widehat q_\omega(x,C_e).
\end{aligned}
\]
The maximum posterior is also invariant because relabelling only permutes the
coordinates:
\[
p_{\omega,\max}^\pi(x)=p_{\omega,\max}(x).
\]
Thus any plug-in threshold rule depending on
$\widehat q_\omega(x,C_e)-p_{\omega,\max}(x)$ is coherently invariant.

\subsection{Expanded proof of \Cref{thm:bayes-alignment}}
\label{app:proof-bayes-alignment}

Fix $(x,C_e)$ and consider the idealized conditional loss
\[
\ell(\Pi\mid x,C_e)
\coloneqq 
\Ex\left[-\log\Pi_Y-\Gamma^\star(x,Y,C_e)\log\Pi_\perp
\mid X=x,C_E=C_e\right].
\]
Expanding the expectation over $Y$ gives
\[
\ell(\Pi\mid x,C_e)
=
-
\sum_y\eta_y(x)\log\Pi_y
-
\sum_y\eta_y(x)\Gamma^\star(x,y,C_e)\log\Pi_\perp.
\]
By definition,
\[
q^\star(x,C_e)=\sum_y\eta_y(x)\Gamma^\star(x,y,C_e),
\]
so the conditional loss is
\[
\ell(\Pi\mid x,C_e)
=
-
\sum_y\eta_y(x)\log\Pi_y
-
q^\star(x,C_e)\log\Pi_\perp.
\]
This is a weighted cross-entropy over the augmented action space. The weights are
\[
w_y=\eta_y(x),\qquad w_\perp=q^\star(x,C_e),
\]
with total mass
\[
W=\sum_y\eta_y(x)+q^\star(x,C_e)=1+q^\star(x,C_e).
\]
The same Lagrange multiplier calculation as in
\Cref{app:proof-augmented-softmax} gives the minimizer
\[
\Pi_y^\star(x,C_e)=\frac{\eta_y(x)}{1+q^\star(x,C_e)},
\qquad
\Pi_\perp^\star(x,C_e)=\frac{q^\star(x,C_e)}{1+q^\star(x,C_e)}.
\]
The denominator is positive, so the induced defer/predict comparison is
\[
\begin{aligned}
\Pi_\perp^\star(x,C_e)\ge\max_y\Pi_y^\star(x,C_e)
&\quad\Longleftrightarrow\quad
\frac{q^\star(x,C_e)}{1+q^\star(x,C_e)}\ge
\frac{\max_y\eta_y(x)}{1+q^\star(x,C_e)}\\
&\quad\Longleftrightarrow\quad q^\star(x,C_e)\ge\max_y\eta_y(x).
\end{aligned}
\]
This is exactly the Bayes L2D comparison with the posterior-predictive expert
correctness probability conditioned on the available expert context.

\section{Relaxing Context-Posterior Invariance}
\label{app:context-posterior-extension}

The main method assumes \Cref{assump:context-posterior-invariance}, so that the
expert context is used to infer expert competence but not the label posterior of
the query. This appendix details \RACERC{}, the context-conditioned
classification extension of \RACERKernel{}, for deployments in which
$C_e$ may also reveal case mix, site, workflow, or acquisition information that
changes $\Prob(Y\mid X)$.

Without \Cref{assump:context-posterior-invariance}, the Bayes-relevant expert
correctness probability is
\begin{equation}
\label{eq:context-posterior-general-q}
q^\star(x,C_e)
=
\sum_{y\in\Y}
\underbrace{\Prob(Y=y\mid X=x,C_E=C_e)}_{\eta^C_y(x,C_e)}
\underbrace{\Prob(M_E=y\mid X=x,Y=y,C_E=C_e)}_{\Gamma^\star(x,y,C_e)}.
\end{equation}
The competence term is unchanged: \RACER{} still estimates
$\Gamma^\star(x,y,C_e)$. The additional object is the
context-conditioned label posterior
\[
\eta^C_y(x,C_e)\coloneqq \Prob(Y=y\mid X=x,C_E=C_e),
\]
which can be approximated by a role-equivariant posterior adapter
$\pi_\rho(y\mid x,C_e)$.

\paragraph{A role-equivariant posterior adapter.}
Let $p_\omega(y\mid x)$ be the image-only classifier posterior. A minimal
implementation keeps this posterior as a strong prior and learns a shared
role-relative logit adjustment,
\begin{equation}
\label{eq:racer-c-adapter}
b_{\rho,y}(x,C_e)
\coloneqq 
\log p_\omega(y\mid x)+a_\rho(x,y,C_e),
\qquad
\pi_\rho(y\mid x,C_e)\coloneqq \softmax_y b_{\rho,y}(x,C_e).
\end{equation}
The code implements the equivalent construction
$b_{\rho,y}=f_{\omega,y}+a_\rho$ using the original pre-softmax class logits.
The two forms differ only by a common softmax-normalization constant; in
particular, $a_\rho=0$ recovers $p_\omega$. This describes the computation
used by the supplied implementation. Here $a_\rho$ is produced by the same
network for every candidate role. Its
inputs should be role-admissible summaries, for example the image-only posterior
value and rank at role $y$, the same-role support $N_{e,y}$, the same-role
similarity mass $S_{e,y}(x)$, local context label density, and symmetric context
summaries. To preserve coherent relabelling equivariance, the adapter should not
use learned class embeddings, untied class-specific heads, or fixed
class-coordinate channels.

It is useful to separate two context channels conceptually. The posterior
adapter should use
\[
C_e^Y\coloneqq \{(x_i^C,y_i^C)\}_{i=1}^B
\]
to estimate case-mix information, while the competence head uses
\[
C_e^M\coloneqq \{(x_i^C,y_i^C,m_i^C)\}_{i=1}^B
\]
to estimate expert behavior. The expert predictions $m_i^C$ are therefore part
of the competence signal, not the default label-posterior signal.

The resulting \RACERC{} expert-correctness estimate is
\begin{equation}
\label{eq:racer-c-qhat}
\widehat q_{\rho,\theta}(x,C_e)
\coloneqq 
\sum_{y\in\Y}\pi_\rho(y\mid x,C_e)\Gamma_\theta(x,y,C_e).
\end{equation}
The autonomous prediction and plug-in rejector should use the same
context-conditioned posterior,
\begin{equation}
\label{eq:racer-c-rejector}
h_\rho(x,C_e)\coloneqq \argmax_{y\in\Y}\pi_\rho(y\mid x,C_e),
\qquad
\widehat r^{C}_\tau(x,C_e)
\coloneqq 
\1{\widehat q_{\rho,\theta}(x,C_e)-\max_y\pi_\rho(y\mid x,C_e)\ge \tau}.
\end{equation}
Using $\pi_\rho$ in \eqref{eq:racer-c-qhat} but comparing against
$p_{\omega,\max}(x)$ would be internally inconsistent: if the context changes
the label posterior, it changes both the expert-correctness marginalization and
the model-correctness side of the Bayes comparison.

\paragraph{Training.}
The posterior adapter can be trained episodically with a standard multiclass
proper loss,
\begin{equation}
\label{eq:racer-c-posterior-loss}
\calL_{\mathrm{post}}(\rho)
\coloneqq 
-\Ex\log \pi_\rho(Y\mid X,C_E^Y),
\end{equation}
where the target query is held out from its context. The competence head retains
\Cref{eq:gamma-functional,prop:proper-competence}'s binary proper loss for
$\Gamma_\theta$. A plug-in implementation can therefore minimize
\[
\calL_{\mathrm{post}}+\lambda_{\mathrm{comp}}\calL_{\mathrm{comp}},
\]
with an optional Bayes-aligned deferral surrogate obtained by replacing the
class logits in \Cref{ssec:bayes-aligned-training} with the context-conditioned
logits $b_{\rho,y}(x,C_e)$ and replacing $\widehat q_\omega$ with
$\widehat q_{\rho,\theta}$.

\paragraph{Why this is not the default method.}
\RACERC{} must estimate both the rolewise competence law $\Gamma^\star$ and
the context-conditioned label posterior $\eta^C$. Errors in this additional
posterior affect both expert-correctness estimates and classifier confidence.
Estimating it can increase variance when contexts are small or $B/K$ is low.
It may also learn associations with
context composition that fail to transfer across deployments. Under
\Cref{assump:context-posterior-invariance}, the target satisfies
$\eta^C_y(x,C_e)=\eta_y(x)$, so context provides no additional information
about the query label given $x$. We therefore use \RACERKernel{} as the
default under this assumption and evaluate \RACERC{} as an optional
extension when context is informative about query case mix.

\section{Computational Complexity}
\label{app:computational-complexity}

For one query and one candidate expert, a direct implementation of the rolewise
sums in \eqref{eq:same-role-mass}--\eqref{eq:local-smoothed-statistic} would
cost $O(KB)$ after image features are available. In practice the context
embeddings, labels, correctness indicators, $N_{e,y}$, and $\mu_{e,0}$ are
cached for each expert. The query embedding is computed once, the $B$
query--context similarities cost $O(Bd)$ for feature dimension $d$, and the
rolewise numerator and denominator are obtained by one scatter-add over the $B$
context items, costing $O(B+K)$. Applying the shared calibration MLP to all roles
costs $O(Kc_g)$ for a small per-role network cost $c_g$, and the final plug-in
sum is $O(K)$. Thus the optimized per-query, per-expert cost is
$O(Bd+B+Kc_g)$ plus the classifier forward pass, rather than $O(KB)$ pooling.
Routing among $J$ candidate experts scales linearly in $J$ but is batchable over
experts and roles. For very large $K$ or $B$, the same form admits standard
engineering approximations, such as evaluating only high-posterior roles or
using approximate nearest-neighbor context retrieval.

\section{Synthetic Protocol Details}
\label{app:synthetic-protocol}

This appendix collects protocol details omitted from the main experiments
section. The synthetic experiments have four parts: a PathMNIST context-scaling
study that fixes the hardest locally complementary expert setting and varies
context size, a representation-geometry ablation that tests whether subtype
recoverability rather than simulator leakage explains the PathMNIST gains, a
CIFAR-100 class-scale study that varies class cardinality, expert strength,
hidden subtype dependence, and expert-profile permutation, and a context-posterior
stress test that controls the degree to which the expert context reveals query
case mix.

\subsection{PathMNIST context-scaling protocol}
\label{app:pathmnist-context-protocol}

The PathMNIST context-scaling experiment uses the MedMNIST PathMNIST dataset at
64px resolution with all nine classes. The run contains $107{,}180$ total images,
split into $89{,}996$ training, $10{,}004$ validation, and $7{,}180$ test images.
The context size $B$ is varied while the class count is fixed at $K=9$, and the
main figures plot the normalized support ratio $B/K$:

\begin{table}[H]
\centering
\small
\setlength{\tabcolsep}{6pt}
\renewcommand{\arraystretch}{1.05}
\begin{tabular}{rrrrrrrr}
\toprule
$B$ & 9 & 25 & 50 & 100 & 200 & 500 & 1000 \\
$B/K$ & 1.0 & 2.8 & 5.6 & 11 & 22 & 56 & 111 \\
\bottomrule
\end{tabular}
\caption{PathMNIST context sizes used in the context-scaling ablation.}
\label{tab:pathmnist_context_sizes}
\end{table}

We use the strongest locally complementary synthetic expert profile: full
within-class subtype dependence ($\rho=1$), full expert-profile permutation
($\lambda_{\mathrm{id}}=1$), and good/mid/bad subtype accuracies
$(0.98,0.70,0.30)$ with three hidden subtypes per class. Experts are simulated
by extracting image features, clustering training images into class-conditional
hidden subtypes, assigning each expert a class/subtype competence tensor, and
sampling expert predictions for every image. ID experts preserve the class-role
profile; nominal OOD experts receive an expert-only permuted competence profile across
observed classes. Each run uses 32 seen training experts, 8 unseen-ID validation
experts, 8 unseen-OOD validation experts, 32 seen evaluation experts, 8 unseen-ID
test experts, and 8 unseen-OOD test experts. Early stopping uses unseen-ID
validation experts. At evaluation time, fixed context/query episodes are sampled
and reused across expert groups and methods; context sets are class-balanced when
possible. Each point uses three seeds.

Each method is trained separately for each $B$; this is not an inference-only
context sweep of a fixed checkpoint. Larger context also leaves fewer query
images within each evaluation split, so the query set need not be identical
across context sizes.

\subsection{Representation-geometry ablation}
\label{app:representation-geometry-ablation}

The main PathMNIST study uses class-conditional image-feature clusters to create
hidden expert subtypes, and \RACER{} also uses query--context geometry to recover
local competence. We therefore test whether the gains depend on using the same
representation to generate and retrieve hidden subtypes. Holding the PathMNIST
setting fixed at $K=9$, $B=500$, $\rho=1$,
$\lambda_{\mathrm{id}}=1$, and subtype accuracies $(0.98,0.70,0.30)$,
we generate hidden subtypes from mixtures of classifier features and auxiliary
image-quality features,
\begin{equation}
\label{eq:rep-ablation-mixture}
\psi_\alpha(x)
\coloneqq 
\mathrm{normalize}\!\left(
\alpha\phi_{\mathrm{cls}}(x)
+\sqrt{1-\alpha^2}\,\psi_{\mathrm{aux}}(x)
\right),
\end{equation}
with $\alpha\in\{1.0,0.75,0.5,0.25,0.0\}$. Here
$\phi_{\mathrm{cls}}$ is the default classifier/deep representation and
$\psi_{\mathrm{aux}}$ contains low-level color, contrast, texture, sharpness,
and stain-quality summaries. For this ablation only, we also evaluate an
auxiliary-geometry \RACERKernel{} variant that keeps the same competence
interface but uses $\psi_{\mathrm{aux}}$ for same-role similarity. Finally, we
include a random-subtype negative control in which hidden subtypes are assigned
independently within class. This destroys query--context recoverability while
preserving the same class/subtype expert-competence tensor.

\begin{table}[H]
\centering
\scriptsize
\setlength{\tabcolsep}{4pt}
\renewcommand{\arraystretch}{1.05}
\resizebox{\textwidth}{!}{%
\begin{tabular}{lcccccc}
\toprule
Method & $\alpha=1.00$ & $\alpha=0.75$ & $\alpha=0.50$ & $\alpha=0.25$ & $\alpha=0.00$ & Random \\
\midrule
Classifier-conf. & $0.8134\pm0.0099$ & $0.8193\pm0.0032$ & $0.8137\pm0.0117$ & $0.8172\pm0.0035$ & $0.8167\pm0.0044$ & \best{$0.8231\pm0.0011$} \\
IFD-score \citep{strong2026identity} & $0.8295\pm0.0034$ & $0.8357\pm0.0017$ & $0.8345\pm0.0067$ & $0.8324\pm0.0028$ & $0.8342\pm0.0054$ & $0.8081\pm0.0024$ \\
IFD-MLP \citep{strong2026identity} & $0.8114\pm0.0048$ & $0.8179\pm0.0053$ & $0.8151\pm0.0074$ & $0.8156\pm0.0056$ & $0.8193\pm0.0091$ & $0.8089\pm0.0076$ \\
L2D-Pop (QC) \citep{tailor2024population} & $0.8124\pm0.0038$ & $0.8104\pm0.0201$ & $0.8132\pm0.0036$ & $0.8106\pm0.0230$ & $0.8173\pm0.0040$ & $0.8122\pm0.0021$ \\
\RACERKNN{} \textit{(Ours)} & $0.8313\pm0.0052$ & $0.8342\pm0.0064$ & $0.8453\pm0.0076$ & $0.8330\pm0.0019$ & $0.8397\pm0.0101$ & $0.8121\pm0.0009$ \\
\RACERKernel{} \textit{(Ours)} & \best{$0.8432\pm0.0030$} & \best{$0.8583\pm0.0045$} & \best{$0.8563\pm0.0052$} & \best{$0.8573\pm0.0025$} & \best{$0.8593\pm0.0068$} & $0.8136\pm0.0010$ \\
\RACERKernel{}-aux \textit{(Ours)} & $0.8312\pm0.0026$ & $0.8356\pm0.0027$ & $0.8367\pm0.0074$ & $0.8342\pm0.0122$ & $0.8363\pm0.0055$ & $0.8194\pm0.0009$ \\
\bottomrule
\end{tabular}%
}
\caption{PathMNIST representation-geometry ablation. Entries are overall AURSAC
mean $\pm$ standard deviation over three seeds at $B=500$, $K=9$,
$\rho=1$, and $\lambda_{\mathrm{id}}=1$. The $\alpha$ columns generate hidden
subtypes from classifier/auxiliary feature mixtures using
\eqref{eq:rep-ablation-mixture}. The random column assigns hidden subtypes
independently within class. Bold denotes the best method in each column.}
\label{tab:pathmnist_representation_ablation_aursac}
\end{table}

\begin{table}[H]
\centering
\small
\setlength{\tabcolsep}{6pt}
\renewcommand{\arraystretch}{1.05}
\begin{tabular}{lcccc}
\toprule
Subtype source & \RACERKernel{} gain & Same-role purity & Brier $\downarrow$ & ECE $\downarrow$ \\
\midrule
$\alpha=1.00$ & $+0.0298$ & $0.6013$ & $0.2017$ & $0.0234$ \\
$\alpha=0.75$ & $+0.0390$ & $0.7609$ & $0.1850$ & $0.0207$ \\
$\alpha=0.50$ & $+0.0426$ & $0.7888$ & $0.1800$ & $0.0184$ \\
$\alpha=0.25$ & $+0.0401$ & $0.7819$ & $0.1838$ & $0.0259$ \\
$\alpha=0.00$ & $+0.0425$ & $0.7853$ & $0.1787$ & $0.0142$ \\
Random & $-0.0095$ & $0.3338$ & $0.2247$ & $0.0122$ \\
\bottomrule
\end{tabular}
\caption{Mechanism diagnostics for \RACERKernel{} in the representation-geometry
ablation. Gain is AURSAC improvement over the classifier-confidence router.
Same-role purity is the fraction of the top-$k$ same-true-role context neighbors
that share the query's hidden subtype under the geometry used by
\RACERKernel{}. With three hidden subtypes, chance purity is $1/3$.}
\label{tab:pathmnist_representation_ablation_diagnostics}
\end{table}

\RACERKernel{} remains the strongest method across all structured
classifier/auxiliary subtype geometries. Its gain over the classifier increases
from $+0.0298$ AURSAC at $\alpha=1.0$ to roughly $+0.039$--$+0.043$ for
$\alpha<1.0$. The same-role purity diagnostic explains why this is not a
representation-mismatch failure case: auxiliary-defined subtypes are still highly
recoverable by the default \RACERKernel{} geometry, with purity around
$0.76$--$0.79$ for $\alpha<1.0$. In contrast, random within-class subtypes drive
purity to chance and remove the routing gain, with \RACERKernel{} falling
$0.0095$ AURSAC below the classifier. These results support the intended
mechanism: role-relative local competence estimation helps when expert-relevant
within-class structure is recoverable from context geometry, and it does not
spuriously improve when that structure is absent.

\subsection{Context-posterior stress-test protocol}
\label{app:context-posterior-stress}

The context-posterior stress test reuses the PathMNIST label space and synthetic
expert simulator, but adds a latent case-mix domain $S\in\{1,\ldots,L\}$. For
each domain $s$, we sample a domain-specific class prior $\alpha_s\in
\Delta^{K-1}$ and define a global prior $\alpha_0$. Context labels are drawn
from $\alpha_s$, while query labels are drawn from
\begin{equation}
\label{eq:ctx-stress-query-prior}
\alpha_{\lambda}(s)
\coloneqq 
(1-\lambda_{\mathrm{ctx}})\alpha_0
+\lambda_{\mathrm{ctx}}\alpha_s.
\end{equation}
Thus $\lambda_{\mathrm{ctx}}=0$ satisfies the intended context-posterior
invariance regime, while $\lambda_{\mathrm{ctx}}=1$ makes the context maximally
informative about query case mix. Expert competence generation is held fixed
across the sweep, so the experiment isolates context-posterior shift rather
than changing the expert-reliability task. \RACERC{} uses the posterior adapter
from \Cref{app:context-posterior-extension}; all other components are matched to
\RACERKernel{}.

\begin{table}[H]
\centering
\small
\setlength{\tabcolsep}{5pt}
\renewcommand{\arraystretch}{1.05}
\begin{tabular}{lccccc}
\toprule
$\lambda$ & Kernel AURSAC & \RACERC{} AURSAC & Kernel Brier & \RACERC{} Brier & $\Delta$ NLL \\
\midrule
0.00 & $0.8149\pm0.0034$ & $0.8157\pm0.0043$ & $0.2254\pm0.0014$ & $0.2253\pm0.0015$ & $0.0050\pm0.0056$ \\
0.25 & \best{$0.8173\pm0.0007$} & $0.8160\pm0.0033$ & $0.2252\pm0.0003$ & $0.2251\pm0.0007$ & $-0.0032\pm0.0170$ \\
0.50 & $0.8187\pm0.0058$ & \best{$0.8229\pm0.0039$} & \best{$0.2239\pm0.0005$} & $0.2244\pm0.0012$ & $-0.0212\pm0.0058$ \\
1.00 & $0.8211\pm0.0071$ & \best{$0.8342\pm0.0040$} & $0.2222\pm0.0018$ & \best{$0.2213\pm0.0008$} & $-0.1154\pm0.0018$ \\
\bottomrule
\end{tabular}
\caption{Context-posterior stress test. $\Delta$ label NLL is
$\mathrm{NLL}(\pi_\rho)-\mathrm{NLL}(p_\omega)$, so negative values indicate
that the context-conditioned posterior improves held-out label-posterior fit.
At small violation strengths \RACERC{} is tied with \RACERKernel{}; when the
context strongly carries case-mix information, \RACERC{} improves both label
fit and routing.}
\label{tab:context_posterior_stress}
\end{table}

The results support the modelling tradeoff. Under the intended regime
($\lambda_{\mathrm{ctx}}=0$), the posterior adapter provides essentially no
routing benefit. As the violation strengthens, the adapter becomes useful: at
$\lambda_{\mathrm{ctx}}=1$, \RACERC{} improves AURSAC by $0.0131$ and reduces
label NLL by $0.1154$. We therefore keep \RACERKernel{} as the default estimator
and use \RACERC{} only as an extension for deployments where the context is
allowed to reveal stable case-mix information.

\subsection{CIFAR-100 class-scale protocol}
\label{app:cifar-protocol}

The $K=100$ setting uses all CIFAR-100 fine labels. The $K=20$ setting
uses a seeded superclass-balanced subset: CIFAR-100 has twenty superclasses, and
we select exactly one fine class from each superclass using seed $0$. Thus
$K=20$ is a balanced low-cardinality stress test rather than the first twenty
CIFAR classes, while $K=100$ retains the full fine-label and
superclass-confusion structure.

\begin{table}[H]
\centering
\small
\setlength{\tabcolsep}{5pt}
\renewcommand{\arraystretch}{1.05}
\begin{tabular}{lrrrr}
\toprule
Setting & Fine classes & Train & Val. & Test \\
\midrule
$K=20$ & 20 & 9{,}000 & 1{,}000 & 2{,}000 \\
$K=100$ & 100 & 45{,}000 & 5{,}000 & 10{,}000 \\
\bottomrule
\end{tabular}
\caption{Synthetic CIFAR-100 label-space splits. The canonical training
split is stratified into train/validation subsets.}
\label{tab:cifar_splits}
\end{table}

\begin{table}[H]
\centering
\scriptsize
\setlength{\tabcolsep}{5pt}
\renewcommand{\arraystretch}{1.05}
\begin{tabular}{llll}
\toprule
Superclass & Selected fine class & Superclass & Selected fine class \\
\midrule
\texttt{aquatic\_mammals} & \texttt{beaver} &
\texttt{fish} & \texttt{shark} \\
\texttt{flowers} & \texttt{poppy} &
\texttt{food\_containers} & \texttt{bottle} \\
\texttt{fruit\_and\_vegetables} & \texttt{mushroom} &
\texttt{household\_electrical} & \texttt{clock} \\
\texttt{household\_furniture} & \texttt{chair} &
\texttt{insects} & \texttt{beetle} \\
\texttt{large\_carnivores} & \texttt{wolf} &
\texttt{large\_manmade\_outdoor} & \texttt{bridge} \\
\texttt{large\_natural\_outdoor} & \texttt{forest} &
\texttt{large\_omnivores\_herbivores} & \texttt{elephant} \\
\texttt{medium\_mammals} & \texttt{raccoon} &
\texttt{noninsect\_invertebrates} & \texttt{crab} \\
\texttt{people} & \texttt{boy} &
\texttt{reptiles} & \texttt{lizard} \\
\texttt{small\_mammals} & \texttt{mouse} &
\texttt{trees} & \texttt{oak\_tree} \\
\texttt{vehicles\_1} & \texttt{train} &
\texttt{vehicles\_2} & \texttt{streetcar} \\
\bottomrule
\end{tabular}
\caption{The seeded superclass-balanced $K=20$ subset. Exactly one fine
class is selected from each CIFAR-100 superclass.}
\label{tab:cifar_k20_subset}
\end{table}

For every image, we construct an unobserved subtype. We extract frozen ResNet-18
image features, cluster the training images within each true class into three
subtypes, and assign validation and test images to the nearest within-class
subtype centroid. Expert correctness then depends on the pair
$(Y,\text{subtype})$. When an expert is incorrect, the wrong label is sampled
from a structured confusion prior that places most of its mass on fine classes
from the same CIFAR-100 superclass when such alternatives are available.

We use two expert-strength profiles. The weak profile is the original setting
with base expert accuracy $0.65$, class-skill standard deviation $0.55$,
global-skill standard deviation $0.20$, and three subtypes. When within-class
specialization is active, its good/mid/bad subtype accuracies are approximately
$0.90$, the sampled classwise baseline around $0.65$, and $0.40$. The strong
profile uses a cleaner locally complementary structure: good, mid, and bad
subtypes have accuracies $0.98$, $0.70$, and $0.30$, respectively.

Two binary factors define the stress-test grid. The subtype factor
$\rho\in\{0,1\}$ controls whether competence is classwise or instance-dependent:
$\rho=0$ makes all subtypes within a class share the same skill, while $\rho=1$
activates full hidden-subtype specialization. The permutation factor
$\lambda_{\mathrm{id}}\in\{0,1\}$ controls whether the nominal OOD group
receives an expert-only permutation of its classwise competence profile.
Because class effects and subtype assignments are exchangeable across classes,
this full permutation preserves the population law. It changes a realized
expert's mapping to fixed roles, but does not create a population shift.
The synthetic ``OOD'' results therefore concern a separately sampled unseen
expert group, not verified out-of-distribution transfer.

Each run uses 32 seen-training experts, 8 unseen-ID validation experts, 8
unseen-OOD validation experts, 32 seen-evaluation experts, 8 unseen-ID test
experts, and 8 unseen-OOD test experts. Early stopping uses the unseen-ID
validation experts. For each $K$ and expert-strength profile, the grid contains
seven methods, four $(\rho,\lambda_{\mathrm{id}})$ settings, and three seeds,
for $84$ runs; all four grids completed $84/84$ runs.

\section{Extended Literature Review}
\label{app:extended-lit-review}
\subsection{Learning with rejection and learning to defer}
Learning to defer is closely related to selective classification and
learning with a rejection option. Classical rejection methods abstain
when the model is uncertain, often comparing model confidence to a fixed
cost of rejection. L2D replaces the fixed rejection cost with a human or
external expert whose correctness depends on the input. This changes the
Bayes rule from ``reject when the model is uncertain'' to ``defer when the
expert is more likely to be correct than the model.'' Early L2D
formulations introduced this human--AI risk objective, while later work
developed consistent surrogates and practical neural training objectives
\citep{madras2018predict,mozannar2020consistent}.
The distinction matters for our setting. A selective classifier only
needs to estimate its own reliability. A deferral system must compare
two agents: the model and the expert. For unseen experts, the expert
side of this comparison must be inferred from context. This is why our
method focuses on estimating expert competence rather than only learning
a rejector.
\subsection{Clinical collaboration and hierarchical deferral}
The scoping review of \citet{strong2026haic} examines human--AI
collaboration across healthcare tasks, highlighting the roles of workflow,
interaction design, and appropriate reliance alongside predictive
performance. A concrete application is the guided deferral system of
\citet{strong2025guided}, which classifies medical reports with large
language models and supplies guidance to humans on deferred cases.
\RACER{} focuses on estimating the correctness of an available expert
from context; it does not generate guidance for that expert.

\citet{strong2026hierarchical} study coherent deferral for hierarchical
multi-label medical imaging, constraining prediction and delegation actions
to respect the label taxonomy and handoff semantics. This concerns the
compatibility of actions across related labels. The coherent relabelling
studied here instead concerns invariance to class names in a multiclass
decision. Extending \RACER{}'s competence estimates to hierarchical
deferral would additionally require an appropriate structured decision rule.

\subsection{Calibration in fixed-expert and fixed multi-expert L2D}
Several recent papers study calibration in L2D.
\citet{verma2022calibrated} show that the standard augmented-softmax
surrogate of \citet{mozannar2020consistent} is decision-consistent but
does not directly yield calibrated estimates of expert correctness. The
relevant expert-correctness estimator is an odds transform of the
deferral coordinate and can exceed $1$. Their one-vs-all alternative
directly parameterizes expert correctness with a bounded sigmoid and
improves calibration.
\citet{verma2023learning} extend the calibration analysis to fixed
multi-expert L2D. They show that symmetric softmax formulations can
couple expert-correctness estimates across experts, so the estimate for
one expert depends on the deferral coordinates of other experts. This is
undesirable when one wants calibrated expert-specific probabilities.
\citet{cao2023defense} further show that the problem is not the use of
softmax itself, but the symmetric treatment of class probabilities and
expert correctness. The correct probability object has product
structure: class probabilities live in a simplex, while expert
correctness is a separate scalar.
Our work builds on this lesson but changes the problem setting. Prior
calibrated L2D work asks whether a fixed-expert or fixed multi-expert
system can estimate $\Prob(M=Y\mid X=x)$ or $\Prob(M_j=Y\mid X=x)$.
\RACER{} asks whether a system can estimate $\Prob(M_E=Y\mid X=x,C_E=C_e)$
for an unseen expert represented only by a small context. This
introduces an additional statistical challenge: the expert-correctness
probability must be inferred from a set-valued context and should
generalize across experts.
\subsection{Population-adaptive deferral and unseen experts}
Population-adaptive L2D studies deferral to experts not necessarily
observed during training. L2D-Pop learns to condition a rejector on a
small context set of expert behavior \citep{tailor2024population}.
Query-independent variants compress the context into a fixed expert
representation; query-conditioned variants allow the current input to
attend to relevant context examples. This makes them expressive enough
to model instance-dependent expertise.
However, high-capacity context encoders can exploit absolute class
identity when they consume one-hot labels, class embeddings, or
class-specific channels. This creates a failure mode under coherent
relabellings or shifted expert populations. IFD addresses this by
removing expert identity and absolute class-coordinate shortcuts, but it
does so through classwise competence profiles
\citep{strong2026identity}. \RACER{} keeps the identity-free symmetry
while restoring query dependence.
\subsection{Decision consistency, probability estimation, and representation}
\label{app:decision-vs-probability}
A subtle point in comparing L2D-Pop, IFD, and \RACER{} is that all three
can be Bayes-aligned as defer/predict decision rules, yet they differ
substantially as estimators of expert competence. This distinction is
central to our empirical findings.
\paragraph{The augmented softmax is a decision surrogate.}
For a fixed query $x$ and expert context $C_e$, let
\[
\eta_y(x)\ \text{denote the label posterior},
\qquad
q^\star(x,C_e)=\Prob(M_E=Y\mid X=x,C_E=C_e).
\]
The conditional population form of the augmented softmax L2D loss is
\[
\mathcal S(\Pi\mid x,C_e)
\coloneqq 
-\sum_y \eta_y(x)\log \Pi_y(x,C_e)
-
q^\star(x,C_e)\log \Pi_\perp(x,C_e),
\]
where $\Pi$ is a distribution on $\Y\cup\{\perp\}$. By a Lagrange
multiplier calculation, the unique minimizer over the simplex is
\[
\Pi_y^\star(x,C_e)
=
\frac{\eta_y(x)}{1+q^\star(x,C_e)},
\qquad
\Pi_\perp^\star(x,C_e)
=
\frac{q^\star(x,C_e)}{1+q^\star(x,C_e)}.
\]
Therefore
\[
\Pi_\perp^\star(x,C_e)\ge \max_y \Pi_y^\star(x,C_e)
\quad\Longleftrightarrow\quad
q^\star(x,C_e)\ge \max_y \eta_y(x).
\]
Equivalently, at the logit level,
\[
g_\perp^\star(x,C_e)-g_y^\star(x)
=
\log q^\star(x,C_e)-\log \eta_y(x),
\]
up to an additive constant shared by all logits. The augmented softmax
loss therefore learns the correct Bayes comparison. This is why
one-stage L2D training is powerful: the classifier and rejector can
co-adapt to optimize system performance rather than being trained as
isolated modules.
\paragraph{Decision consistency is not the same as calibrated competence
estimation.}
The previous calculation shows that the augmented softmax is a good
surrogate for the decision boundary. It does not imply that the deferral
coordinate is itself a calibrated estimate of expert correctness. At the
optimum,
\[
\Pi_\perp^\star(x,C_e)=\frac{q^\star(x,C_e)}{1+q^\star(x,C_e)},
\]
so recovering $q^\star$ requires the odds transform
\[
q^\star(x,C_e)=\frac{\Pi_\perp^\star(x,C_e)}{1-\Pi_\perp^\star(x,C_e)}.
\]
Away from the ideal optimum, this transform is not constrained to lie in
$[0,1]$. Thus the augmented action probability is best viewed as a
normalized decision variable, not as the expert-correctness probability
itself. This distinction is harmless if the goal is only asymptotic
defer/predict consistency, but it matters when the output is used for
calibration, budget ranking, multi-expert comparison, or transfer to
unseen experts.
\paragraph{L2D-Pop learns a latent routing margin.}
L2D-Pop extends the augmented softmax surrogate to population-adaptive
deferral by conditioning the deferral logit on a context representation:
\[
d_{\mathrm{Pop}}(x,C_e)
\coloneqq 
g_\perp(x,\psi(C_e))
\quad\text{or}\quad
d_{\mathrm{Pop}}(x,C_e)
\coloneqq 
g_\perp(x,\psi(x,C_e)).
\]
At the population optimum, the resulting margin can recover the Bayes
comparison. However, the method is not asked to estimate the components
of that comparison separately. The context encoder and rejector learn
the scalar quantity
\[
d_{\mathrm{Pop}}(x,C_e)-\max_y g_y(x)
\approx
\log q^\star(x,C_e)-\log p_{\max}(x).
\]
This scalar can fit the training routing objective even when its
internal explanation is not a transferable expert-competence law. For
example, a high deferral margin may be caused by high expert competence,
low model confidence, a frequent class, a site-specific artifact,
context-set composition, or a fixed class-coordinate shortcut. In ID
settings these explanations may be correlated and therefore empirically
useful. Under expert shift they can come apart.
This is especially important because standard population encoders often
encode class labels and expert predictions in fixed coordinates:
\[
[\varphi(x_i^C),e(y_i^C),e(m_i^C)].
\]
Such encodings allow the rejector to learn identity-conditioned rules,
e.g. ``defer when coordinate $j$ is strong.'' These rules are not ruled
out by the augmented softmax objective, because they can reduce training
risk whenever coordinate identity is predictive in the training
population. But they do not express the invariant rule needed for
transfer: ``defer when the expert is strong on the relevant role.''
\paragraph{IFD estimates competence explicitly, but classwise.}
IFD makes the opposite tradeoff. From context counts
\[
n_{e,y}=\sum_i \1{y_i^C=y},
\qquad
t_{e,y}=\sum_i \1{y_i^C=y,\ m_i^C=y_i^C},
\]
it constructs a Bayesian profile
\[
\theta_{e,y}\mid C_e
\sim
\operatorname{Beta}(\alpha_y+t_{e,y},\beta_y+n_{e,y}-t_{e,y})
\]
with mean $\mu_{e,y}$ and variance $\sigma^2_{e,y}$. In the notation of
our paper, IFD uses the restricted approximation
\[
\Gamma_{\mathrm{IFD}}(x,y,C_e)
=
\mu_{e,y}.
\]
This explicit profile improves interpretability, data efficiency, and
invariance because the rejector receives only role-indexed summaries
such as competence at the model-top class or expert-best class. The
limitation is the missing $x$ dependence: all examples with the same
candidate class share the same competence estimate.
\paragraph{\RACER{} separates competence estimation from system-level
routing.}
\RACER{} keeps the one-stage L2D philosophy but changes the expert
interface. The competence channel estimates the posterior-predictive
role law
\[
\Gamma_\theta(x,y,C_e)
\approx
\Gamma^\star(x,y,C_e)
=
\Prob(M_E=y\mid X=x,Y=y,C_E=C_e)
\]
using a proper expert-correctness loss. The model posterior and
competence estimate are then combined as
\[
\widehat q_\omega(x,C_e)
=
\sum_y p_\omega(y\mid x)\Gamma_\theta(x,y,C_e).
\]
The final deferral decision compares $\widehat q_\omega(x,C_e)$ with
$p_{\omega,\max}(x)$, either directly through the plug-in threshold or
through an invariant decision head. This design preserves the benefit of
joint training. The classifier, competence estimator, and deferral head
can still be optimized together for system performance. But the
competence estimate is not identified only through the augmented action
coordinate. It has its own probability target. Detaching both the loss
weight and the deferral-head inputs blocks routing gradients to its dedicated
parameters, while shared representations still change through joint training:
\[
-\log \Pi_y(x,C_e)
-
\sg[\Gamma_\theta(x,y,C_e)]\log \Pi_\perp(x,C_e).
\]
This does not establish calibration of the trained system. The quantity
reported and compared as expert correctness is the probability-scale,
role-relative estimate
$\widehat q_\omega(x,C_e)$.
\paragraph{Why this matters for ID and OOD experts.}
For ID experts, query conditioning can reduce the approximation bias of
classwise IFD when competence varies within a class. Direct competence
supervision supplies an additional modelling constraint relative to a latent
routing interface; a universal variance reduction is not established here.
Under population shift, coherent invariance excludes absolute class-coordinate
shortcuts, but does not by itself guarantee better routing. L2D-Pop can learn a correct
ID margin through absolute class-coordinate or context-composition
shortcuts. IFD removes those shortcuts but cannot adapt within class.
\RACER{} combines the two desirable properties: it is role-aligned like
IFD and query-conditioned like L2D-Pop. Therefore, when an unseen expert
has the same structural competence pattern under different roles, or
when competence depends on query-local subtypes, \RACER{} has the
appropriate inductive bias:
\[
\text{learn competence relative to the candidate role, then compare it
to model correctness.}
\]
\paragraph{When the distinction disappears.}
In the infinite-data, realizable, perfectly optimized setting, L2D-Pop's
augmented softmax margin can recover the Bayes defer/predict comparison.
If one only cares about the binary decision and the test experts are
drawn from the same distribution, this may be enough. The distinction
becomes important when we ask for more: probability-scale
expert-correctness estimates, budget-swept rankings, multi-expert
comparability, and robust transfer to experts whose classwise or
instancewise competence differs from the training population.
\subsection{Identity-free and equivariant architectures}
The coherent-relabelling requirement in L2D is an instance of a broader
principle: the architecture should respect the symmetries of the task.
If class names are arbitrary, then renaming classes coherently should
not change a binary defer/predict decision. IFD enforces this by passing
only role-indexed and symmetric summaries to the rejector. \RACER{}
extends this idea to instance-dependent competence estimation. Candidate
labels enter only through role-relative relations such as $\1{y_i^C=y}$,
ranks, posterior values at roles, and correctness indicators. No learned
class embedding or untied class-specific channel is used.
This perspective is related to permutation-invariant and equivariant
neural architectures, including DeepSets and Set Transformers
\citep{zaheer2017deep,lee2019set}. However, our invariance is not merely
permutation invariance over context examples; it is coherent relabelling
invariance over class names. The context set may be reordered, and the
class labels may be renamed, yet the predicted expert correctness and
deferral decision should remain semantically unchanged.

\subsection{Real-human expert benchmarks}
A practical difficulty in evaluating L2D is the scarcity of datasets
with persistent expert annotations. Many common classification datasets
provide only a single label, aggregate votes, or sparse crowd
annotations. Clean deferral evaluation requires either a fixed expert
with labels across examples or enough repeated annotator behavior to
construct expert contexts. For this reason, the completed experiments in this manuscript focus on
radiologist and human--AI benchmarks with persistent annotations. These datasets test the deployment-relevant problem
of estimating expert competence from finite context under realistic
expert variability and annotation noise.

\section{Descriptive comparisons and variability}
\label{app:significance}
The real-benchmark summaries are descriptive. CheXpert standard deviations
range over 27 cells (three candidate humans, three held-out AI systems, and
three seeds); VinDr-CXR standard deviations in the main summary range over
five seeds. Cells sharing seeds, images, experts, or pathologies need not be
independent. We omit the earlier cell-level inferential analysis pending a
justified treatment of that dependence. No significance conclusion is drawn
from these summaries.

\section{Additional protocol notes}
\label{app:additional-protocol-notes}
\label{app:real-protocol-details}

The completed result bundle includes full group-level and per-pathology
breakdowns. The main text reports compact aggregate tables because the
fine-grained real-expert tables are large.

\subsection{Implementation and evaluation protocol}
\label{app:implementation-evaluation}

\paragraph{Training objectives and inference scores.}
Let $L_{\mathrm{aug}}(w)\coloneqq -\log\Pi_y-w\log\Pi_\perp$, and let
$L_{\mathrm{CE}}$ denote class cross-entropy. The main implementations use
$\lambda_{\mathrm{comp}}=1$; chest-radiography \RACERC{} additionally uses
$\lambda_{\mathrm{post}}=1$ and image-only CE weight $0.1$.
\begin{table}[H]
\centering\small
\begin{tabularx}{\linewidth}{@{}p{0.19\linewidth}XX@{}}
\toprule
Method & Training objective & Score swept at inference \\
\midrule
Classifier-conf. & $L_{\mathrm{CE}}$ & $-p_{\max}$ \\
IFD-score & $L_{\mathrm{CE}}$ & $\widehat q-p_{\max}$ \\
IFD-MLP & $L_{\mathrm{aug}}(\mathrm{LCB}_{y}\1{y=k_{\mathrm{best}}})$ & $d-\max_y f_y$ \\
L2D-Pop QI/QC & $L_{\mathrm{aug}}(\1{m=y})$ & $d-\max_y f_y$ \\
\RACERKNN{} & $L_{\mathrm{aug}}(\sg[\Gamma_y])$ & $\widehat q-p_{\max}$ \\
\RACERKernel{} / DeepSets & $L_{\mathrm{aug}}(\sg[\Gamma_y])+L_{\mathrm{comp}}$ & $\widehat q-p_{\max}$ \\
\RACERC{} (CXR) & Kernel objective with adapted logits, plus $L_{\mathrm{post}}+0.1L_{\mathrm{CE,base}}$ & $\widehat q-\max_y\pi_y$ \\
\bottomrule
\end{tabularx}
\caption{Implemented objectives and budget-ranking scores. The auxiliary RACER
routing head is trained but its logit is not the primary RACER budget score.
Multi-label losses are averaged over pathologies with valid targets.}
\label{tab:implementation-protocol}
\end{table}
The image encoder and classifier are updated during training; context image
features are evaluated without gradients, while query features and posterior
summaries remain differentiable for competence training. Dedicated RACER
competence parameters receive BCE gradients; routing gradients are stopped at
both the competence loss weight and the deferral-head inputs. The separate
context-posterior stress test also adds image-only auxiliary CE when its
configured weight is positive.

\paragraph{Context sources, separation, and expert selection.}
\begin{table}[H]
\centering\small
\begin{tabularx}{\linewidth}{@{}p{0.19\linewidth}XX@{}}
\toprule
Protocol & Context source and separation & Selection and evaluation \\
\midrule
Synthetic & Training contexts exclude the current minibatch. Evaluation
samples labelled context within the validation/test split and removes those
record indices from its query set. & Separate training for each $B$;
checkpoint selection uses unseen-ID validation AURSAC. Test queries are scored
using their images and expert contexts. \\
VinDr-CXR & Disjoint image splits; each expert's evaluation context and queries
are disjoint subsets of images they annotated. Up to 64 context images are used,
leaving at least one query. & Leave-one-reader-out agreement target; disagreement
between the other readers is masked. R8/R9/R10 define a co-annotation-cluster
holdout. \\
CheXpert & Disjoint record splits; training contexts contain 64 examples.
Validation/test contexts occupy half their respective splits, with disjoint
query remainders. & One candidate human plus AI experts; the held-out expert is
an AI system. Calibration uses the expert selected by each method. \\
\bottomrule
\end{tabularx}
\caption{Information and split protocol in the supplied implementations.
Contexts contain historical image labels and expert responses. Query outcomes
are used to score predictions, not as competence-head inputs at evaluation.}
\label{tab:data-protocol}
\end{table}
In CheXpert's multi-expert evaluation, IFD-score and RACER choose the available
expert with highest $\widehat q$; IFD-MLP and L2D-Pop choose by their learned
routing scores. Classifier confidence uses reproducible random expert selection
and ranks queries by classifier uncertainty. Expert availability is determined
by the annotation mask. Brier and ECE therefore compare different selected
expert--query pairs and jointly reflect expert selection and correctness
estimation, rather than a fixed-pair calibration comparison. Reported best
budgets and best accuracies summarize the test curves retrospectively; they
are not validation-selected operating points.

\subsection{VinDr-CXR protocol details}
\label{app:vindr-protocol-details}

The most important VinDr-CXR caveat is that the official test labels are
consensus-only and have no \texttt{rad\_id}. All L2D evaluation therefore uses
the multi-radiologist train annotations with image-level and expert-level splits.
For radiologist $e$, the expert prediction is $e$'s own binary annotation and
the target is the leave-one-radiologist-out consensus of the other two readers.
If those two readers disagree, the pathology is masked. The all-rater majority
label is used only for classifier-only training and image-level stratification.

VinDr-CXR does not provide a hospital identifier in the annotation file, so the
co-annotation-cluster holdout is defined from co-annotation frequencies. The dominant
co-annotation cluster is R8/R9/R10; we use this as an inferred OOD expert group.
This is a radiologist/co-annotation-cluster shift proxy, not a verified hospital
label.

\begin{table}[H]
\centering
\small
\setlength{\tabcolsep}{5pt}
\renewcommand{\arraystretch}{1.05}
\begin{tabular}{lcccc}
\toprule
Group & Classifier acc. & Expert acc. & Expert bal. acc. & Classifier AURSBAC \\
\midrule
Overall & $0.9026\pm0.0004$ & $0.9700\pm0.0003$ & $0.8365\pm0.0032$ & $0.6878\pm0.0078$ \\
Seen & $0.9860\pm0.0015$ & $0.9942\pm0.0010$ & $0.8314\pm0.0108$ & $0.7057\pm0.0152$ \\
Unseen-ID & $0.9628\pm0.0060$ & $0.9862\pm0.0041$ & $0.8323\pm0.0739$ & $0.7083\pm0.0466$ \\
Unseen-OOD & $0.8780\pm0.0006$ & $0.9628\pm0.0004$ & $0.8414\pm0.0014$ & $0.6898\pm0.0077$ \\
\bottomrule
\end{tabular}
\caption{VinDr-CXR expert-strength diagnostics for the classifier-confidence
baseline. Values are mean $\pm$ standard deviation over the five final seeds.
The deferred radiologist is substantially more accurate than the image-only
classifier, especially on unseen-OOD cases, which explains why a simple
classifier-confidence deferral rule is competitive. Expert balanced accuracy is
reported because raw multi-label accuracy is inflated by frequent negative
pathology labels.}
\label{tab:vindr_expert_accuracy}
\end{table}

\begin{table}[H]
\centering
\small
\setlength{\tabcolsep}{5pt}
\renewcommand{\arraystretch}{1.05}
\begin{tabular}{llr@{\qquad}llr}
\toprule
\multicolumn{3}{c}{Top triples} & \multicolumn{3}{c}{Top pairs} \\
\cmidrule(lr){1-3}\cmidrule(lr){4-6}
Experts & & Count & Experts & & Count \\
\midrule
R8/R9/R10 && 5501 & R8/R10 && 5931 \\
R2/R3/R5 && 115 & R8/R9 && 5621 \\
R1/R2/R3 && 104 & R9/R10 && 5529 \\
R2/R3/R6 && 100 & R2/R3 && 1042 \\
R12/R13/R16 && 99 & R1/R2 && 911 \\
\bottomrule
\end{tabular}
\caption{VinDr-CXR co-annotation structure used to define the inferred
R8/R9/R10 OOD expert cluster.}
\label{tab:vindr_coannotation}
\end{table}

\subsection{CheXpert multi-expert protocol details}
\label{app:chexpert-protocol-details}

CheXpert is evaluated as a multi-label binary-relevance task. Each run treats one
human reader as the candidate human expert, includes two of the three external AI
systems as seen experts during training and validation, and holds out the
remaining AI system until test time. The primary candidate-reader set is
$\{\texttt{bc1},\texttt{bc5},\texttt{bc7}\}$; \texttt{bc2} and \texttt{bc3}
are retained for consensus construction. For a deferred human reader, the target
is formed from the remaining human readers, so the deferred reader's own
annotation is not used as its target. The same reference target is used for
every candidate AI system in that run: a positive label requires at least
three positive votes among the four remaining human readers, with all four
votes available. The full grid has three candidate human
experts, three held-out AI choices, and three seeds, for $27$ matched cells per
method. All adaptive methods train with $B=64$; the multi-expert evaluator
uses half of each validation/test image split as context (67/66 images for the
saved 134/133-image splits), with the remainder as queries, rather than
enforcing the training context size.

The three external AI systems are CARZero, KAD, and CheXZero. They were chosen
because they are pretrained chest-radiograph systems, rather than small models
trained only inside our experiment, and they represent strong contemporary CXR
recognition paradigms: zero-shot or language-aligned CXR prediction and
knowledge-guided chest-radiograph diagnosis. This makes the AI deferral targets
plausible high-performing clinical-imaging experts. In each run, two of these
AI systems are available during training and validation, while the third is held
out until test time; rotating the held-out system tests whether a router trained
with human and AI context can generalize to an unseen pretrained CXR model.

\subsection{Extended CheXpert metrics}
\label{app:real-extended-tables}

\begin{table*}[H]
\centering
\scriptsize
\setlength{\tabcolsep}{3.5pt}
\renewcommand{\arraystretch}{1.06}
\resizebox{\textwidth}{!}{%
\begin{tabular}{lccccccccc}
\toprule
Method & Seen+unseen & Seen & AURSAC & Best acc. & Best budget & Model acc. & Expert acc. & Brier & ECE \\
\midrule
\RACERC{} & \best{$0.7243\pm0.0341$} & $0.7244\pm0.0328$ & $0.9008\pm0.0068$ & $0.9128\pm0.0082$ & $0.1259\pm0.1305$ & \best{$0.9042\pm0.0127$} & $0.8896\pm0.0084$ & $0.0947\pm0.0071$ & $0.0418\pm0.0134$ \\
\RACERKernel{} & $0.7241\pm0.0259$ & \best{$0.7276\pm0.0224$} & $0.9030\pm0.0075$ & \best{$0.9149\pm0.0080$} & $0.2041\pm0.2126$ & $0.9029\pm0.0090$ & $0.8907\pm0.0092$ & $0.0916\pm0.0068$ & $0.0366\pm0.0098$ \\
IFD-MLP \citep{strong2026identity} & $0.7105\pm0.0405$ & $0.7139\pm0.0349$ & $0.9012\pm0.0093$ & $0.9083\pm0.0095$ & $0.3133\pm0.3320$ & $0.9011\pm0.0109$ & $0.8947\pm0.0083$ & $0.0861\pm0.0061$ & \best{$0.0247\pm0.0075$} \\
IFD-score \citep{strong2026identity} & $0.7013\pm0.0444$ & $0.7092\pm0.0431$ & $0.9057\pm0.0060$ & $0.9111\pm0.0062$ & $0.2930\pm0.2901$ & $0.8967\pm0.0102$ & \best{$0.8993\pm0.0065$} & $0.0849\pm0.0052$ & $0.0293\pm0.0067$ \\
L2D-QC \citep{tailor2024population} & $0.7007\pm0.0219$ & $0.7026\pm0.0202$ & $0.9014\pm0.0089$ & $0.9119\pm0.0100$ & $0.2200\pm0.2297$ & $0.9030\pm0.0107$ & $0.8861\pm0.0080$ & -- & -- \\
L2D-QI \citep{tailor2024population} & $0.6986\pm0.0265$ & $0.7029\pm0.0224$ & $0.9035\pm0.0098$ & $0.9118\pm0.0098$ & $0.2174\pm0.1581$ & $0.9012\pm0.0100$ & $0.8905\pm0.0095$ & -- & -- \\
\RACERKNN{} & $0.6956\pm0.0360$ & $0.6987\pm0.0348$ & \best{$0.9076\pm0.0062$} & $0.9136\pm0.0059$ & $0.2985\pm0.2787$ & $0.9002\pm0.0088$ & $0.8991\pm0.0074$ & \best{$0.0844\pm0.0061$} & $0.0534\pm0.0088$ \\
Classifier-conf. & $0.6931\pm0.0360$ & $0.7001\pm0.0340$ & $0.8836\pm0.0058$ & $0.9099\pm0.0097$ & $0.0826\pm0.0396$ & $0.8999\pm0.0102$ & $0.8538\pm0.0067$ & -- & -- \\
\bottomrule
\end{tabular}%
}
\caption{Extended CheXpert multi-expert metrics corresponding to
\Cref{tab:chexpert-multiexpert-results}. The main text reports the primary
balanced-accuracy and calibration columns; secondary accuracy and budget
summaries are included here for completeness.}
\label{tab:chexpert-multiexpert-extended}
\end{table*}

\section{Ablations}

\subsection{Ablation: Role-relative DeepSets competence estimator}

\label{alb:deepsets}

To isolate the contribution of explicit same-role kernel pooling, we implemented
RACER-DeepSets, a permutation-invariant neural competence estimator trained with the
same BCE competence target as RACER-kernel. RACER-DeepSets receives only
role-admissible query--context features: candidate-role equality indicators, context
correctness, query--context similarity, posterior values and ranks, and symmetric
context summaries. It uses no class embeddings, class-specific heads, or absolute
class-coordinate channels, preserving the coherent-relabelling constraint used by
Racer.

Table~\ref{tab:pathmnist_deepsets_gain} reports AURSAC gain over classifier confidence
for all methods and context sizes. RACER-DeepSets is the strongest non-kernel neural
ablation, but RACER-kernel is consistently better at moderate and large context sizes.
In particular, at \(B=1000\), RACER-kernel reaches \(+0.0273\) overall AURSAC gain,
compared with \(+0.0194\) for RACER-DeepSets.

Table~\ref{tab:pathmnist_deepsets_calibration} reports calibration metrics. RACER-kernel
also gives lower Brier score at all nontrivial context sizes and remains substantially
better at large context sizes, reaching \(0.2029\) at \(B=1000\), compared with \(0.2197\)
for RACER-DeepSets. These results suggest that the main conceptual contribution is the
role-relative probability-scale competence interface, while explicit same-role kernel
pooling provides a useful finite-context smoothing bias.

\begin{table*}[h]
\centering
\small
\setlength{\tabcolsep}{3.5pt}
\renewcommand{\arraystretch}{1.04}
\resizebox{\textwidth}{!}{%
\begin{tabular}{lccccccc}
\toprule
Method & $B=9$ & $B=25$ & $B=50$ & $B=100$ & $B=200$ & $B=500$ & $B=1000$ \\
\midrule
IFD-score & $-0.0084\pm0.0023$ & $-0.0119\pm0.0072$ & $+0.0000\pm0.0081$ & $-0.0055\pm0.0078$ & $-0.0012\pm0.0047$ & $+0.0025\pm0.0045$ & $+0.0044\pm0.0052$ \\
IFD-MLP & $-0.0143\pm0.0056$ & $-0.0113\pm0.0036$ & $-0.0045\pm0.0037$ & $-0.0068\pm0.0020$ & $-0.0043\pm0.0096$ & $-0.0020\pm0.0049$ & $+0.0003\pm0.0062$ \\
L2D-Pop (QI) & $-0.0099\pm0.0045$ & $-0.0191\pm0.0116$ & $-0.0100\pm0.0023$ & $-0.0192\pm0.0162$ & $-0.0112\pm0.0117$ & $-0.0049\pm0.0042$ & $-0.0061\pm0.0040$ \\
L2D-Pop (QC) & $-0.0060\pm0.0013$ & $-0.0146\pm0.0088$ & $-0.0076\pm0.0056$ & $-0.0061\pm0.0023$ & $-0.0050\pm0.0013$ & $-0.0042\pm0.0038$ & $-0.0092\pm0.0070$ \\
\RACERKNN{} & $-0.0086\pm0.0034$ & $-0.0128\pm0.0081$ & $-0.0049\pm0.0083$ & $-0.0039\pm0.0065$ & $-0.0008\pm0.0102$ & $+0.0058\pm0.0124$ & $+0.0147\pm0.0064$ \\
\RACERKernel{} & $-0.0080\pm0.0034$ & $\mathbf{-0.0006}\pm0.0036$ & $\mathbf{+0.0098}\pm0.0144$ & $\mathbf{+0.0129}\pm0.0025$ & $\mathbf{+0.0186}\pm0.0037$ & $\mathbf{+0.0251}\pm0.0068$ & $\mathbf{+0.0273}\pm0.0051$ \\
RACER-DeepSets & $\mathbf{-0.0057}\pm0.0058$ & $-0.0041\pm0.0072$ & $+0.0074\pm0.0082$ & $+0.0019\pm0.0091$ & $+0.0156\pm0.0085$ & $+0.0204\pm0.0017$ & $+0.0194\pm0.0029$ \\
\bottomrule
\end{tabular}%
}
\caption{PathMNIST context-scaling AURSAC gain over the classifier-confidence router. Entries are mean $\pm$ standard deviation over three seeds, paired by seed against classifier confidence at the same context size. The classifier baseline is omitted because its gain is zero by definition. Bold denotes the best non-classifier method at each context size.}
\label{tab:pathmnist_deepsets_gain}
\end{table*}

\begin{table*}[h]
\centering
\footnotesize
\setlength{\tabcolsep}{3pt}
\renewcommand{\arraystretch}{1.06}
\resizebox{\textwidth}{!}{%
\begin{tabular}{llccccccc}
\toprule
Method & Metric & $B=9$ & $B=25$ & $B=50$ & $B=100$ & $B=200$ & $B=500$ & $B=1000$ \\
\midrule
\multirow{2}{*}{IFD-score} & Brier $\downarrow$ & $0.2560\pm0.0016$ & $0.2519\pm0.0013$ & $0.2416\pm0.0004$ & $0.2339\pm0.0010$ & $0.2284\pm0.0004$ & $0.2237\pm0.0007$ & $0.2224\pm0.0006$ \\
 & ECE $\downarrow$ & $0.1054\pm0.0046$ & $0.1358\pm0.0027$ & $0.1064\pm0.0003$ & $0.0807\pm0.0059$ & $0.0567\pm0.0015$ & $0.0328\pm0.0003$ & $0.0225\pm0.0018$ \\
\addlinespace[1pt]
\multirow{2}{*}{IFD-MLP} & Brier $\downarrow$ & $0.2563\pm0.0016$ & $0.2516\pm0.0019$ & $0.2417\pm0.0007$ & $0.2339\pm0.0012$ & $0.2283\pm0.0005$ & $0.2236\pm0.0008$ & $0.2222\pm0.0004$ \\
 & ECE $\downarrow$ & $0.1055\pm0.0047$ & $0.1351\pm0.0040$ & $0.1067\pm0.0009$ & $0.0808\pm0.0054$ & $0.0567\pm0.0029$ & $0.0320\pm0.0007$ & $\mathbf{0.0214}\pm0.0026$ \\
\addlinespace[1pt]
\multirow{2}{*}{\RACERKNN{}} & Brier $\downarrow$ & $0.4302\pm0.0062$ & $0.3013\pm0.0035$ & $0.2572\pm0.0002$ & $0.2387\pm0.0009$ & $0.2306\pm0.0005$ & $0.2253\pm0.0019$ & $0.2221\pm0.0013$ \\
 & ECE $\downarrow$ & $0.4252\pm0.0068$ & $0.2171\pm0.0059$ & $0.1441\pm0.0023$ & $0.0978\pm0.0051$ & $0.0732\pm0.0015$ & $0.0633\pm0.0053$ & $0.0572\pm0.0025$ \\
\addlinespace[1pt]
\multirow{2}{*}{\RACERKernel{}} & Brier $\downarrow$ & $0.2251\pm0.0007$ & $\mathbf{0.2233}\pm0.0013$ & $\mathbf{0.2200}\pm0.0010$ & $\mathbf{0.2168}\pm0.0006$ & $\mathbf{0.2102}\pm0.0018$ & $\mathbf{0.2054}\pm0.0010$ & $\mathbf{0.2029}\pm0.0006$ \\
 & ECE $\downarrow$ & $0.0122\pm0.0094$ & $\mathbf{0.0131}\pm0.0035$ & $\mathbf{0.0220}\pm0.0076$ & $\mathbf{0.0365}\pm0.0027$ & $\mathbf{0.0302}\pm0.0047$ & $\mathbf{0.0249}\pm0.0061$ & $0.0255\pm0.0070$ \\
\addlinespace[1pt]
\multirow{2}{*}{RACER-DeepSets} & Brier $\downarrow$ & $\mathbf{0.2249}\pm0.0007$ & $0.2250\pm0.0012$ & $0.2219\pm0.0017$ & $0.2191\pm0.0032$ & $0.2188\pm0.0031$ & $0.2209\pm0.0047$ & $0.2197\pm0.0022$ \\
 & ECE $\downarrow$ & $\mathbf{0.0107}\pm0.0010$ & $0.0144\pm0.0146$ & $0.0363\pm0.0098$ & $0.0398\pm0.0226$ & $0.0445\pm0.0162$ & $0.0534\pm0.0320$ & $0.0466\pm0.0176$ \\
\bottomrule
\end{tabular}%
}
\caption{PathMNIST context-scaling expert-correctness calibration. Entries are mean $\pm$ standard deviation over three seeds. Brier score and 15-bin ECE are reported only for methods that output probability-scale expert-correctness estimates. Bold denotes the best value at each context size for the corresponding metric.}
\label{tab:pathmnist_deepsets_calibration}
\end{table*}

\end{document}